\documentclass[conference,usenames,12pt,a4paper]{IEEEtran}
\pdfoutput=1   

\usepackage[T1]{fontenc}
\usepackage{ifpdf}
\usepackage{flushend}  
\usepackage{ulem}
\usepackage{alphalph}
\usepackage[unicode=true, bookmarks=true,bookmarksnumbered=false,
            bookmarksopen=false,hyperfootnotes=false,breaklinks=true, pdfborder={0 0 1},backref=false,colorlinks=true]{hyperref}
\usepackage{amsmath}
\usepackage{amssymb}
\usepackage{amsthm}
\newtheorem{theorem}{Theorem}
\newtheorem{proposition}[theorem]{Proposition}
\newtheorem{lemma}[theorem]{Lemma}

\usepackage{multirow}
\usepackage[dvipsnames]{xcolor}
\usepackage{graphicx}  
\usepackage[font=footnotesize,caption=false]{subfig}
\usepackage{caption} 

\usepackage{booktabs, makecell, tabularx} 
\usepackage{rotating}
\usepackage[export]{adjustbox}

\usepackage{tablefootnote}

\usepackage{url}
\usepackage{soul}

\usepackage{orcidlink}

\newcolumntype{L}{>{\centering\arraybackslash}m{1cm}} 
\makeatletter
\newcommand{\printfnsymbol}[1]{%
  \textsuperscript{\@fnsymbol{#1}}%
}
\makeatother


\addtocounter{secnumdepth}{1}

\bibliographystyle{habbrvyr}

\IEEEoverridecommandlockouts 

\begin{document}
\title{On The Relationship between Visual Anomaly-free and Anomalous
Representations$^*$\thanks{$^*$To appear in ECCV'24 Workshop on Industrial
Inspections}}

\author{
\IEEEauthorblockN{Riya Sadrani\textsuperscript{\textsection}}
\IEEEauthorblockA{\textit{TCS Research} \\
riya.sadrani@tcs.com}
\and
\IEEEauthorblockN{Hrishikesh Sharma,~\IEEEmembership{Member,~IEEE,}\orcidlink{0000-0001-9647-0661}\textsuperscript{\textsection}}
\IEEEauthorblockA{\textit{TCS Research} \\
hrishikesh.sharma@tcs.com}
\and
\IEEEauthorblockN{Ayush Bachan}
\IEEEauthorblockA{\textit{TCS Research} \\
ayush.bachan@tcs.com}
}

\maketitle              
\begingroup\renewcommand\thefootnote{\textsection}
\footnotetext{Equal contribution}
\endgroup

\begin{abstract}
Anomaly Detection is an important problem within computer vision, having
variety of real-life applications. Yet, the current set of solutions to
this problem entail known, systematic shortcomings. Specifically,
contemporary surface Anomaly Detection task assumes the presence of
multiple specific anomaly classes e.g. cracks, rusting etc., unlike
one-class classification model of past. However, building a deep learning
model in such setup remains a challenge because anomalies arise rarely, and
hence anomaly samples are quite scarce. Transfer learning has been a
preferred paradigm in such situations. But the typical source domains with
large dataset sizes e.g. ImageNet,  JFT-300M, LAION-2B do not correlate
well with the domain of surfaces and materials, an important premise of
transfer learning. In this paper, we make an important hypothesis and show,
by \textbf{exhaustive} experimentation, that the space of anomaly-free
visual patterns of the normal samples correlates well with each of the
various spaces of anomalous patterns of the class-specific anomaly samples.
The first results of using this hypothesis in transfer learning have indeed
been quite encouraging. We expect that finding such a simple closeby domain
that readily entails large number of samples, and which also oftentimes
shows interclass separability though with narrow margins, will be a useful
discovery. Especially, it is expected to improve domain adaptation for
anomaly detection, and few-shot learning for anomaly detection, making
in-the-wild anomaly detection realistically possible in future.
\end{abstract}
\section{Introduction}
\label{intro_sec}
Anomaly detection (AD) is an longstanding computer vision problem, which has gained renewed interest due to major advances in deep learning (DL) techniques \cite{ad_review_pap}. As is known, usage of DL has consitently shown performance improvement on allied tasks such as object detection and segmentation. Not only newer, more complex and diverse \textbf{AD} datasets have been released in recent times \cite{visa_pap,mvtec_ad_pap,neu_det_pap,btad_pap,codebrim_pap,sewer_ws_pap}, but also there has been a steady increase in research, especially in computer vision, on AD task modeling \cite{cfa_pap,phy_crack_pap,ad_dist_shift_pap,transfer_al_pap,blend_adsynth_pap}.  The interest is also fueled by equal interest arising from industries, especially with manufacturing firms adopting Industry 4.0 standard that entails \textit{intelligent automation} of industry processes such as inspections.

The traditional task modeling of AD problem has been to consider it as a \textit{one-class classification} (OCC) problem \cite{deep_occ_pap,ad_review_pap}. This consideration is also reflected in the manner some of the recent, more real-life-like datasets continue to be organized \cite{mvtec_ad_pap,btad_pap}: the training subset contains exclusively of normal samples, forming one class. However, this thinking is now being challenged. Against the perpetuated OCC modeling, inspired by the success of supervised learning within DL, recent datasets such as \cite{codebrim_pap,neu_det_pap,sewer_ws_pap} assume presence of \textbf{multiple} visual anomaly classes, that need to be discriminated. From a causal perspective, different anomalies in industrial context arise from various different types of malfunctions, and it becomes important to identify the malfunction, to fix operational problems.

One problem with this anomaly detection context arises from the strict requirements of deep learning. Especially in supervised learning and even in self-supervised learning settings, deep learning models require a lot of class-specific data to train upon. It is well-known in AD area, that other than a few anomaly classes such as cracks that arise more frequently, the other anomalies occur only rarely.  Hence in general, there are not enough samples, even of the order of hundreds, for most of the anomalies, in the best of the contemporary datasets.

In lack of enough samples of various anomaly classes, whether labeled or not, one line of AD modeling relies on pretrained models from other domains, which can then be fine-tuned on anomalous data domains. For example, Imagenet-pretrained models have been used \textit{quite often} in deep AD, some examples being \cite{att_al_pap,cutpaste_ad_pap,spade_ad_pap,differnet_ad_pap,reg_ad_pap, rep_occ_pap,panda_ad_pap}. However, as is understood, the domain of natural images and the domain of material surfaces have a \textbf{large} domain gap. This gap results in many problems, such as poor localization till date \cite{saa_pap}, incompatible knowledge transfer \cite{fsig_incomp_pap}, features remaining uninformative even after fine-tuning \cite{ms_ad_pap}, to name a few. Our own past experiments with anomaly localization also yielded similar insights. Even the recent \textit{multiclass} AD datasets with labels are still suggestive of a train:test split \cite{codebrim_pap,sewer_ws_pap}, which avoids domain shift between train and test data. Based on these gaps, we pose the following important question: \textit{does there exist a more intuitive domain that has lesser domain gap with various visual anomalous patterns and classes?}

To answer this question, we make a hypothesis: \textit{the space of anomaly-free visual patterns of the normal samples correlates\footnote{We use ``correlation" in the English sense, to refer to linear as well as non-linear class relationships.} well with each of the various spaces of anomalous visual patterns of the class-specific anomaly samples}. To verify this hypothesis, we work with latent representations of class-specific visually anomalous regions and normal regions. We vary the anomaly datasets to have tens of anomaly classes, employ tens of popular backbones of heterogeneous types, employ various image-space and latent-space metrics and conduct exhaustive experiments to verify the hypothesis both qualitatively and quantitatively. Other than a few outliers, we find this hypothesis to be generally true. The hypothesis is important since close dependence among such patterns opens up gates for improved domain adaptation along with anomaly detection, enabling few-shot AD models via transfer learning etc. 
In summary, our contribution is as follows:

\begin{itemize}
\item  We conduct exhaustive empirical examination of statistical dependence between anomaly-free and anomalous visual patterns on several real-world datasets, using several backbones and several metrics.
\item We first-time\footnote{To the best of authors' knowledge} establish the hypothesis about close class relationships between several anomaly-free and anomalous visual patterns.
\item We employ the hypothesis and show promising results, surpassing a recent baseline.
\end{itemize}

The rest of the paper is organized as follows. We present related work in next section, followed by details of the exhaustive experimental setup for hypothesis testing in section \ref{exp_sec}. The analyses of various results in section \ref{res_sec} is presented in section \ref{anal_sec}, hypothesis employment in section \ref{tl_sec}, before we conclude the paper.

\begin{figure*}[t]
\begin{center}
\subfloat[Anomalous Image]{\label{crop_1}\includegraphics[trim=150 80 140 100, clip,scale=0.09]{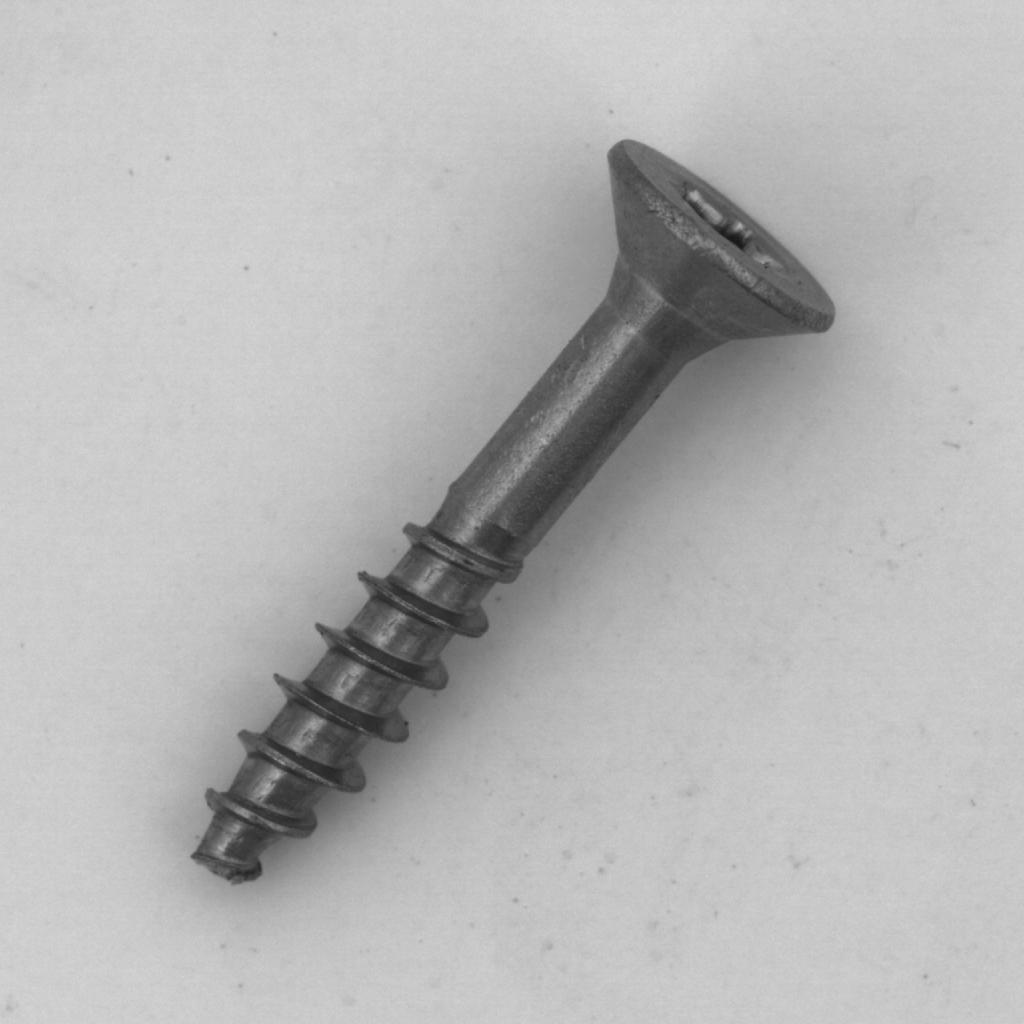}}
\qquad\qquad\qquad\qquad
\subfloat[Cropped Defect Sample]{\label{crop_2}\includegraphics[scale=.73]{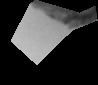}}
\qquad\qquad\qquad\qquad
\subfloat[Cropped Defectless Sample]{\label{crop_3}\includegraphics[scale=.73]{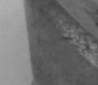}}
    \caption{Example Data Preparation from One Image}
\label{crop_fig}
\end{center}
\end{figure*}

\section{Related Work}
\label{bg_sec}
In recent past, there have been studies about class relationships. However, as far as we know, there has been no study that establishes the correlation between anomaly-free and anomalous visual classes. Hence we summarize the recent works on \textit{somewhat related} problems here.

Data-driven studies of various non-obvious relationships has been happening in machine learning community for sometime now. One may look at \cite{tcvae_pap},\cite{rel_conflict_pap},\cite{exp_pred_pap} for some of such recent studies. While there have been studies at domain-level and instance-level relationships as well, class-level relationships have been studied relatively more. Class-level \textit{semantic} relationship study drives the study of scene graphs and caption generation. Class relationships modeled as \textit{statistically dependent patterns} has been recently used in \cite{class_rel_pap} and \cite{rel_kd_pap}.

The usage of distances for establishing transferability between dataset domains has been studied in past. Such computation indirectly entails looking at class relationships and class separability. Few such recent works can be found in \cite{geo_ddist_pap}, \cite{bhat_sep_pap}.

Fueled by publication of newer defect and damage datasets
\cite{visa_pap,mvtec_ad_pap,neu_det_pap,btad_pap,codebrim_pap,sewer_ws_pap},
Anomaly detection and localization has become a mainstream computer vision
problem, with tens of recent interesting works
\cite{cfa_pap,phy_crack_pap,att_al_pap,reg_ad_pap,cutpaste_ad_pap,spade_ad_pap,differnet_ad_pap,rep_occ_pap,panda_ad_pap,saa_pap,ms_ad_pap}.
We refer the reader to \cite{vad_survey_pap} for an in-depth understanding
and comparison of the most important works on this problem. The tenets and
insights about the problem can be understood from \cite{ad_review_pap}.
Though many of these works bring out the data-scarcity scenario within AD
problem, not many of these works openly point out to the underlying trouble
of large domain shift/gap. A few of the works that allude to domain shift
and consequent domain gap are \cite{panda_ad_pap},
\cite{ad_dist_shift_pap}, \cite{transfer_al_pap}. Some of the applied works
which faintly assume statistical dependence between anomalous and
non-anomalous image regions, without formally mentioning or establishing the same, are \cite{blend_adsynth_pap}, \cite{blend_latent_pap}, \cite{fsig_incomp_pap}, \cite{fsig_ewc_pap}. In the remaining paper, we explicitly bring out details of such relationship.

\begin{table*}[t]
\caption{Information about Backbones}
\label{bb_tab}
\begin{center}
\begin{tabular}{|p{.25\textwidth}|p{.66\textwidth}|} \hline \hline
\multicolumn{1}{|c|}{\textbf{Architecture Category}} & \multicolumn{1}{c|}{\textbf{Included Backbones}} \\ \hline \hline
Transformers     &  Poolformerv1, ViT, FlexiViT, BEiTv1, BEiTv2, DEiTv1, DEiTv3, EVAv1, EVAv2, CAiT, LV-ViT, TnT, DeepViT, CSWin, XCiT, T2TViT, SWiNv1, SWiNv2, DINOv2, CLIP, SWAG \\ \hline
Traditional CNNs     &  ResNet18, BiT, RepVGG, MobileOne, EfficientNet, FocalNet, DenseNet, HRNet, NFNet, ShuffleNetv1, ShuffleNetv2, Wide-Resnet, XCeption \\ \hline
Large-kernel CNNs & ConvNeXT, RepLKNet, SLaKNet, VANet \\ \hline
Hybrids & FastViT, EfficientFormerv1, Efficientformerv2, PVTv1, PVTv2, TWINS, VOLO, PiT, BoTNet, MobileViT, Conformer, CoAT, CMT \\ \hline
Self-supervised & SEER, ConvNextv2 \\ \hline
MLPs & MLP-Mixer \\ \hline
\end{tabular}
\end{center}
\end{table*}

\section{Experimental Setup for Establishing Correlation}
\label{exp_sec}
The regularity in \textit{normal} objects of a particular class imaged in real-world, noisy environment is generally modeled using stochastic patterns. The pattern could be based on shape, color, texture or a mix of all, and it can also be a latent/representation space pattern as well \cite{tp_bookchap}. Anomalous objects or regions have been traditionally considered to be outliers from the probability distribution that underlies the normal pattern \cite{rep_occ_pap}. However, as pointed out in section \ref{intro_sec},
recent works discriminate various anomaly patterns, and consider them as separate anomaly classes. For example, cracks, dents, bends are some well-known anomaly classes, discriminable by even a human eye, with annotations also available from some datasets.

To recall, the research question that we posed in section \ref{intro_sec} was that \textit{does there exist a more intuitive domain that has lesser domain gap with various visual anomalous patterns and classes?}. More formally, we seek to statistically establish (or disprove) that (defect-)class-specific surface defect patterns are related to the patterns of the normal surfaces on which they appear. The relationship can be of the form of statistical dependence between the \textbf{representations} of corresponding samples/regions, in which case one can think about employing correlation or covariance measures as 2$^{nd}$-order\footnote{Statistical independence entails zero correlation, not vice-versa \cite{murphy_ml_book}.} \textit{population-level} relational statistics over multivariate distributions. However, as is well-known, these statistics only model linear relationships between data \cite{pr_bishop_book}, while the succinct representation space of images is essentially non-linear manifolds \cite{dl_book}, which is \textit{not} modeled by these \cite{murphy_ml_book}. Indeed, covariance entails computing an inner product which is meant for algebraic spaces, but has no meaning over topological manifolds (the corresponding relational concept is homeomorphism). To further complicate matters, one may observe that the manifold representation spaces are known to be very high-dimensional spaces with implicit \textit{continuous}, empirical distributions, thus ruling out usage of Chi-squared test as well, for formally verifying the hypothesis.

To get around this limitation, we resort to measurements of divergences and distances. As mentioned in section \ref{bg_sec}, they have been used to indirectly characterize class relationships in past, albeit in a totally different context. These measures define the fitment/similarity of two probability distribution functions (pdfs), whether closed-form or empirical, and hence indirectly characterize the relationship and dependence between two patterns/classes. These measures are related to a broader information-theoretic measure called \textit{mutual information}; we capture that as well. The choice of latent representation space to make such measures is inspired by \cite{dom_shift_pap}, wherein they successfully use image latent space to measure domain gaps.

The sequence of steps for establishing the correlation are depicted in
Fig.~\ref{fc_fig}.

\begin{figure*}[h]
\begin{center}
\includegraphics[scale=0.5]{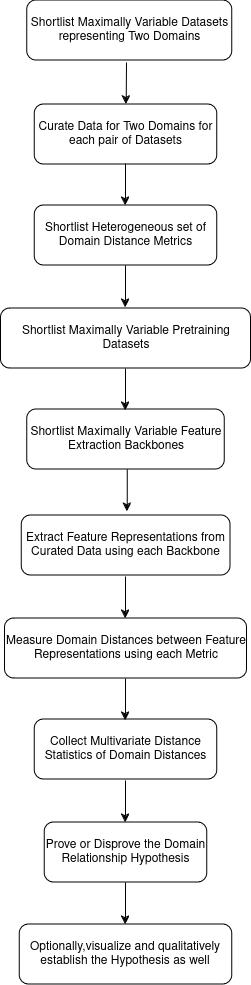}
\caption{Sequence of Steps for Establishing Correlation}
\label{fc_fig}
\end{center}
\end{figure*}

\subsection{Datasets Considered and their Preparation}
\label{dataprep_sec}
There has been a manyfold increase in the number of realistic, complex AD datasets in recent times. Some of such recent datasets include \cite{codebrim_pap,neu_det_pap,sewer_ws_pap,btad_pap,mvtec_ad_pap,visa_pap}. For a more detailed summary of AD datasets, one may refer to \cite{vad_survey_pap}. Given that the traditional AD modeling has been OCC problem, while the recent works consider it as a multiclass problem, we shortlist datasets of \textbf{both} nature. We hence consider MVTec-AD, NEU-DET, BTAD and CODEBRIM for our experiments. Across these datasets, we comprehensively cover \textbf{76} classes of different anomalies. CODEBRIM dataset, for which we report \textbf{only} the distances, is \textit{different} from others in the sense that it is collected during (outdoor) infrastructure inspection. Yet, we demonstrate that our hypothesis is generic enough for all kinds of surfaces, ambient conditions and defect types.

Each of these datasets entail images that are of normal objects, and images that have anomalies in specific regions. The name/class of each of the anomalous regions is also known from the dataset, via label information. Often, the objects being imaged have trivial background, e.g. a monotone gray tray in which a compressed/damaged pill is kept. Hence each \textit{class-specific defect sample} is created by best-fit\footnote{Care has been taken so that each sample is \textbf{pure}: it has no information contamination from any other pattern.} rectangular cropping of anomalous regions, whose contour is known via annotation. A similar crop of \textbf{same box size} at random location in object \textit{foreground} of the \textbf{same} image leads to generation of the corresponding, \textbf{paired} \textit{defectless sample}. The paired cropping is done using the same image, since it fixes the environmental conditions e.g. lighting, camera point of view etc. for both the paired samples (see Fig~\ref{crop_fig}). Notationally, $\forall$\textbf{r}: region within $\forall$\textbf{i}: image having $\forall$\textbf{c}: anomaly class having multiple instances \textbf{r} over $\forall$ $\mathbf{o_c}$: surface type on which particular anomaly arises on an object type within $\forall$\textbf{s}: dataset name, we prepare sample sets $\mathbf{D_{r,i,c,s}}$ and $\mathbf{N_{r,i,o_{c},s}}$. Samples in each set are pure and matched i.e. reshaped to same shape and size over black/information-less background, before their usage.

\subsection{Backbones Considered}
One of the main reasons of success of DL technology has been usage of \textit{transfer learning}. Transfer learning has been possible due to advent of \textbf{backbone networks}: encoders that are generalized enough to create a feature space representation, reusable across a diversity of tasks and datasets. There has been a longstanding race to create backbone networks, and newer backbones are arising frequently even now. For an excellent survey on these \textit{few hundreds of} backbones, one is referred to \cite{bbone_battle_pap}. We choose representatives from \textbf{each} of the \textbf{6} important \textit{types} of backbones, including normal CNN backbones, very-large-kernel CNN backbones and transformer backbones. The shortlisting of backbones has been also done by keeping in mind their popularity as well as their lifespan. Overall, we use \textbf{54} backbones, each of which was carefully chosen to be highly popular, spanning over last ten years.

\subsection{Measures Considered}

As explained earlier, we work mostly with divergences and distances between the two domains of representations: $\mathrm{D_{r,i,c,s}}$ and $\mathrm{N_{r,i,o_{c},s}}$. The representation of each sample is prepared by passing it through each of the backbones shortlisted in Table \ref{bb_tab}. Following \cite{21corr_pap} and \cite{found_ml_book}, we chose \textbf{JS Divergence}, \textbf{Mahalanobis Distance} and \textbf{Wasserstein Distance} as our required measures. Note that Mahalanobis Distance when calculated between two dataset level p.d.f.s is a form of \textit{Bregman Divergence} \cite{found_ml_book}. All of these measures are aggregates of sample pair level measures i.e. a sort of expectation computed over the two p.d.f.s.

As a secondary investigation, we also see if and how the p.d.f.s are correlated in \textit{pixel} space. All other metrics being a form of mutual information, we employ and additionally report a similar metric, \textbf{Regional Mutual Information}(RMI) metric in pixel space \cite{rmi_ss_pap,rmi_sim_pap}. In past, this metric has popularly been used in cross-modal (medical) image registration tasks.

\subsection{Visualizations}
We also look a look at \textbf{UMAP} visualizations \cite{umap_pap} to qualitatively look at the dataset nearness, given a particular backbone. Across backbones, such an exercise at times also reveals whether the geometric configuration of two clusters corresponding to two patterns $\mathrm{D_{r,i,c,s}}$ and $\mathrm{N_{r,i,o_{c},s}}$ remains consistent due to correlation i.e. similar degree of nearness, even when cluster sizes and locations vary due to obvious reasons.

\section{Experimental Results}
\label{res_sec}
The results on representation relations need to be seen from the prism of
distances considered. To recall, within each of the datasets considered,
there are $\geq$ 1 object classes i.e. foreground surfaces. For each
foreground surface, there are specific anomaly classes(defects) that can
arise. In Figs.~\ref{ds_1}, \ref{ds_2}, \ref{ds_3} and \ref{ds_4}, we plot the variation of each of the \textbf{3} metrics across the anomaly classes. The metrics has been \textit{averaged} across various backbones that were considered. We also present certain \textbf{ablation} results in Tables \ref{bb_js_ab_tab}, \ref{bb_mh_ab_tab} and \ref{bb_ws_ab_tab}.

\begin{figure*}[t]
\begin{center}
\subfloat[NEU Dataset Metric]{\label{ds_1}\includegraphics[trim=100 60 100 80, clip, width=.48\textwidth]{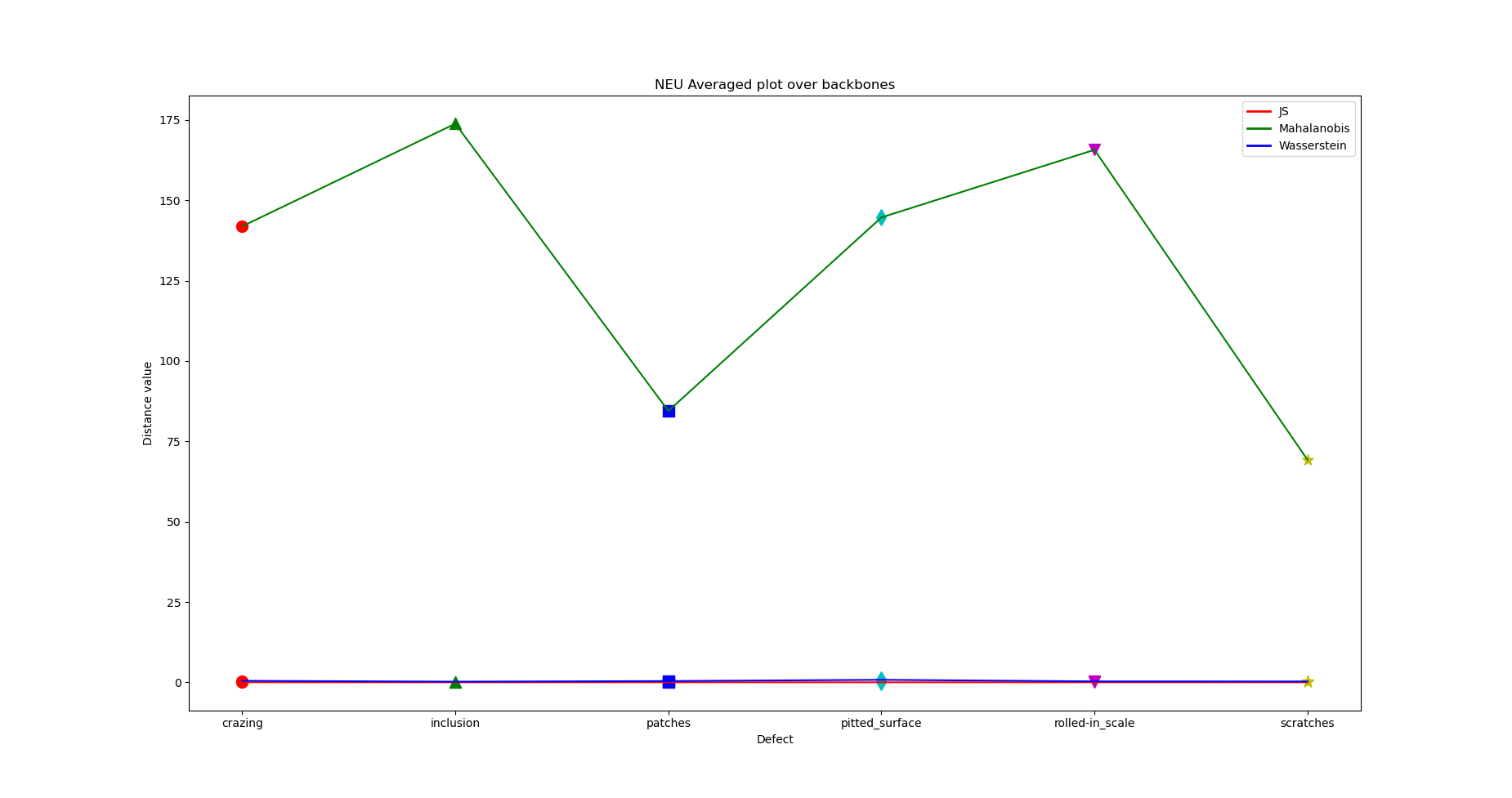}}
\quad
\subfloat[MVTec Dataset Metric]{\label{ds_2}\includegraphics[trim=100 60 100 80, clip,width=.48\textwidth]{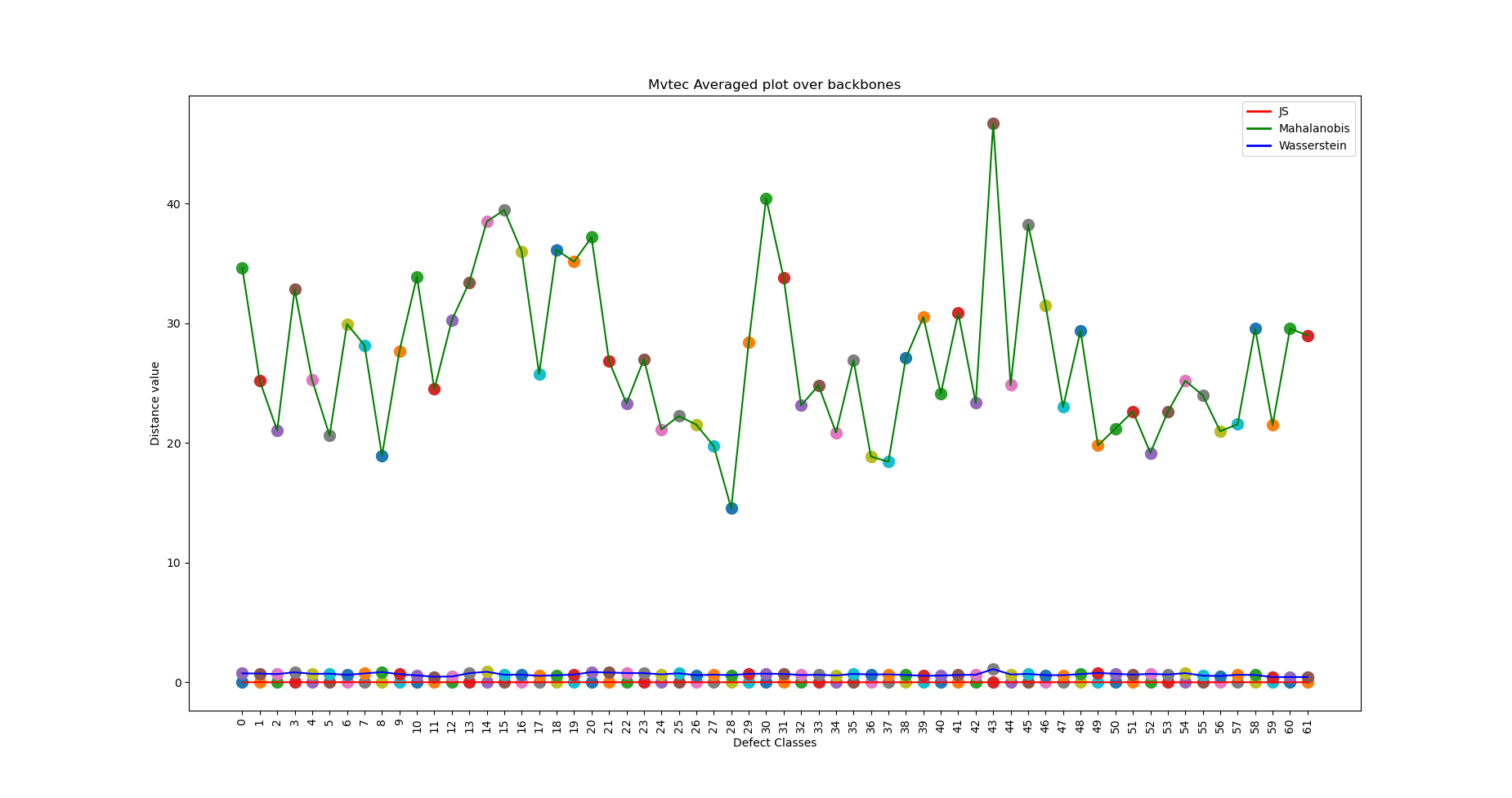}}
\quad
\subfloat[BTAD Dataset Metric]{\label{ds_3}\includegraphics[trim=100 60 100 80, clip,width=.48\textwidth]{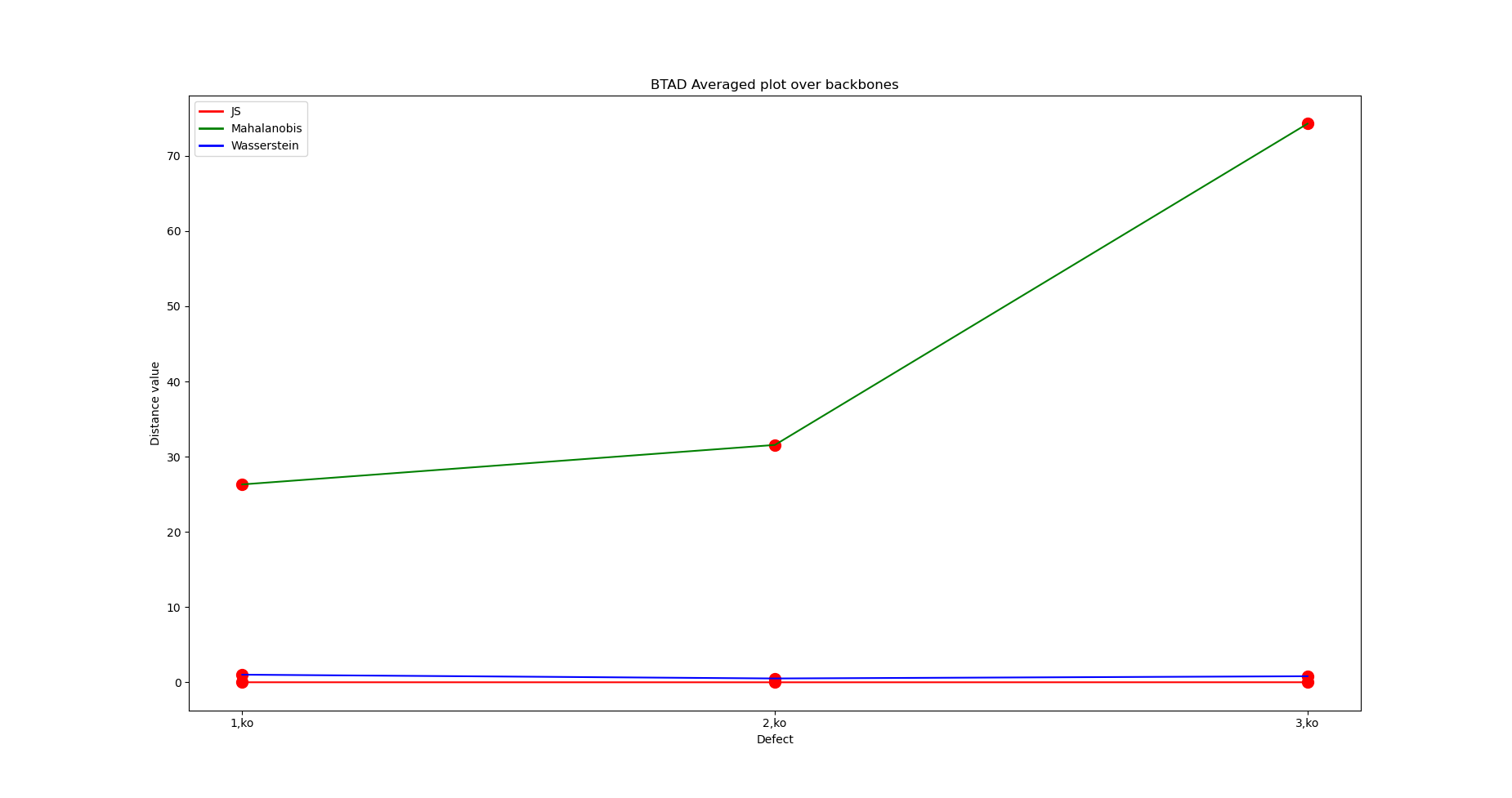}}
\quad
\subfloat[Codebrim Dataset Metric]{\label{ds_4}\includegraphics[trim=0 0 0 0, clip,width=.48\textwidth]{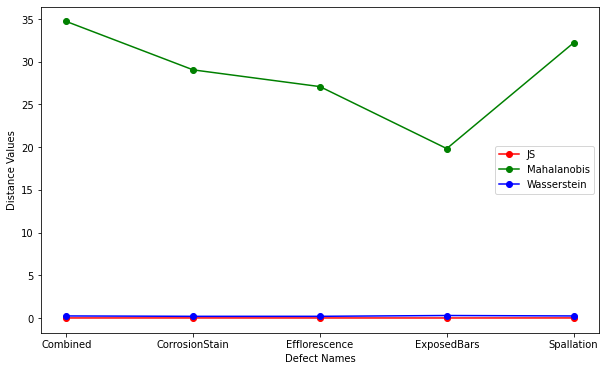}}
\caption{Domain Separation between Anomaly-free and Anomalous Domains. Zoom for better view.}
\label{metric_fig}
\end{center}
\end{figure*}

For understanding RMI results, for each of the datasets, we provide separate scatter plots depicting RMI values for each paired $\langle$anomaly type - anomaly-free surface type$\rangle$. They are shown in Figs.~\ref{rv_1}, \ref{rv_2} and \ref{rv_3}.

The UMAP visualizations of inter-cluster relations for each of the \textbf{3} datasets, for randomly chosen anomaly classes.

\section{Analysis}
\label{anal_sec}
Before we analyze our findings, one needs to understand the lower and upper
bounds between which each of the metrics reside, wherever feasible.

\subsection{Theoretical Ranges of Each Measure}
There are four quantitative measures that we have employed: JS Divergence,
Mahalanobis Distance, Wasserstein Distance and Region Mutual Information.
For the former two, it is possible to put down an upper and a lower bound.
We provide the proof of the bounds for these two measures below. For the
latter two, it is not possible to put down universal bounds. Nevertheless,
we will make a statement about bounds on these latter two measures, and
also sketch out our analysis of whatever attempts have been made to put a
bound on them.

\begin{lemma}
\label{lm1}
The value of scalar KL divergence measure is always non-negative i.e. $>$ 0.
\end{lemma}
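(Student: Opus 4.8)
The plan is to prove the claim by the standard information-theoretic route via Jensen's inequality, since the KL divergence is an expectation of a logarithm and $-\log$ is convex. First I would fix notation, writing the scalar KL divergence between two probability densities (or mass functions) $p$ and $q$ over a common domain as $D_{KL}(p\|q) = \int p(x)\log\frac{p(x)}{q(x)}\,dx$, adopting the usual convention $0\log 0 = 0$ and assuming $p$ is absolutely continuous with respect to $q$ (otherwise the divergence is $+\infty$, which trivially satisfies the bound).

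The key step is to recognize $D_{KL}(p\|q)$ as the expectation $\mathbb{E}_p\!\left[-\log\frac{q(x)}{p(x)}\right]$ under $p$, and then apply Jensen's inequality to the convex function $\varphi(t) = -\log t$. This yields $\mathbb{E}_p[-\log Y] \geq -\log \mathbb{E}_p[Y]$ with $Y = q(x)/p(x)$. Computing the inner expectation gives $\mathbb{E}_p[Y] = \int p(x)\frac{q(x)}{p(x)}\,dx = \int q(x)\,dx = 1$, since $q$ is a normalized density, and hence $D_{KL}(p\|q) \geq -\log 1 = 0$. An equivalent, slightly more elementary route avoids Jensen and uses the pointwise bound $\log t \le t - 1$ (equivalently $-\log t \ge 1 - t$): multiplying by $p(x)$, integrating, and using $\int q = \int p = 1$ delivers the same conclusion.

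I would then address the equality case carefully, since here the lemma as stated is slightly imprecise. Because $-\log$ is \emph{strictly} convex, Jensen's inequality is an equality if and only if $Y = q(x)/p(x)$ is constant $p$-almost everywhere, i.e. $p = q$. Thus the sharp statement is $D_{KL}(p\|q) \geq 0$, with strict positivity $>0$ exactly when $p \neq q$; the "$>0$" written in the lemma should be read as the non-degenerate regime of two genuinely distinct distributions.

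The main obstacle is not the analytic inequality, which is routine, but the careful handling of the boundary conventions and support mismatch: one must justify $0\log 0 = 0$ by continuity and treat the case $p(x)>0,\ q(x)=0$, where the integrand is $+\infty$, so that the claimed non-negativity remains valid in the extended reals. Getting these measure-theoretic edge cases right, rather than the convexity argument itself, is the part that demands genuine care.
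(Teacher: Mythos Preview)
Your argument is correct and more self-contained than the paper's. You prove non-negativity directly from convexity, applying Jensen's inequality (or the elementary bound $\log t \le t-1$) to $-\log(q/p)$ and using that $q$ integrates to $1$; you also handle the equality case and the support/boundary conventions, which the paper does not address at all. The paper instead rewrites $D_{\mathrm{KL}}(P\|Q) = H(P,Q) - H(P)$ and then invokes Gibbs' inequality $H(P) \le H(P,Q)$ as a black box. Since Gibbs' inequality is itself normally proved via exactly the Jensen/$\log t \le t-1$ step you carry out, your route is essentially the same argument one level deeper, with the advantage that it is self-contained and yields the strict-inequality characterization $D_{\mathrm{KL}}(P\|Q)=0 \iff P=Q$ for free; the paper's route is shorter on the page but relies on an external named inequality.
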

\begin{proof}
The KL divergence between p.d.f. \( P\) and p.d.f. \( Q\) is defined as: 
\begin{equation}
\label{eq1}
D_{\text{KL}}(P \| Q) = \sum_{x} P(x) \log \left( \frac{P(x)}{Q(x)} \right)
\end{equation}
where the sum is taken over all paired samples \textbf{x} prepared earlier. By definition, the information entrop of \(P\) and \(Q\) is given by:
\[
H(P) = -\sum_{x} P(x) \log P(x)
\]
which, by Gibb's inequality, is $\mathbf{\leq}$ it's cross-entropy with \textbf{any} other distribution \( Q\):
\begin{eqnarray*}
H(P, Q) & = & -\sum_{x} P(x) \log Q(x) \\
H(P) & \leq & H(P, Q)
\end{eqnarray*}
On reorganizing eq.~\ref{eq1}, we get,
\[
D_{\text{KL}}(P \| Q) ~=~ H(P, Q) - H(P)
\]
Thus, KL Divergence will always be \textit{non-negative} i.e. $\mathbf{\geq 0}$. 
\end{proof}

\begin{figure*}[t]
\begin{center}
\subfloat[NEU Dataset]{\label{rv_1}\includegraphics[trim=5 5 40 34, clip,width=.31\textwidth]{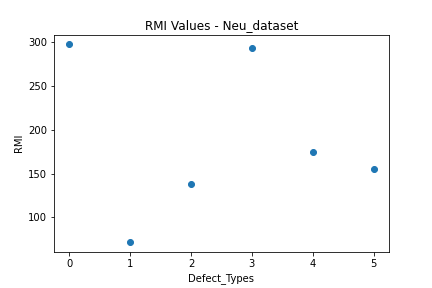}}
\quad
\subfloat[MVTec Dataset]{\label{rv_2}\includegraphics[trim=5 5 40 34, clip,width=.31\textwidth]{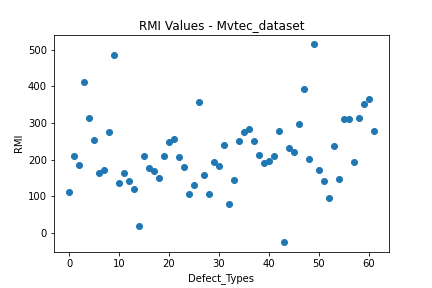}}
\quad
\subfloat[BTAD Dataset]{\label{rv_3}\includegraphics[trim=5 5 40 34, clip,width=.31\textwidth]{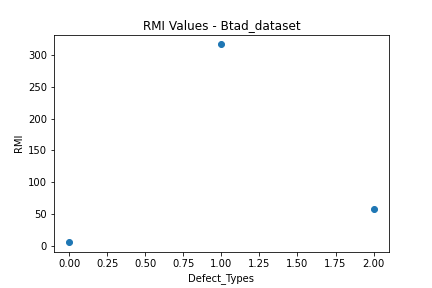}}
\caption{Scatter Plot for RMI Values for various Anomaly classes within each Dataset}
\label{rmi_fig}
\end{center}
\end{figure*}

\begin{theorem}
\label{js_th}
The range of values that the scalar JS divergence measure takes is between 0 and 1.
\end{theorem}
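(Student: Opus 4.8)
The plan is to split the argument into a lower bound and an upper bound, reusing Lemma~\ref{lm1} for the former and an entropy reformulation for the latter. First I would fix the definition of the JS divergence as the symmetrized, smoothed KL divergence,
\[
D_{\text{JS}}(P \| Q) = \tfrac{1}{2} D_{\text{KL}}(P \| M) + \tfrac{1}{2} D_{\text{KL}}(Q \| M), \qquad M = \tfrac{1}{2}(P + Q),
\]
with all logarithms taken to base $2$ (the convention under which the claimed bound is exactly $1$; with natural logs it would instead be $\ln 2$). For the lower bound, observe that $D_{\text{JS}}$ is a convex combination, with weights $\tfrac12, \tfrac12$, of two KL terms, each non-negative by Lemma~\ref{lm1}; hence $D_{\text{JS}}(P\|Q) \geq 0$, with equality precisely when $P = M = Q$.

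For the upper bound I would rewrite the divergence purely in terms of Shannon entropies. Expanding each KL term as cross-entropy minus entropy, as in the proof of Lemma~\ref{lm1}, and collecting the $M$-terms via $\tfrac12 H(P,M) + \tfrac12 H(Q,M) = H(M)$ yields the standard identity
\[
D_{\text{JS}}(P \| Q) = H(M) - \tfrac{1}{2}H(P) - \tfrac{1}{2}H(Q).
\]
The key step is then to read this quantity information-theoretically. I would introduce a fair binary selector $Z \sim \mathrm{Bernoulli}(\tfrac12)$ and a variable $X$ distributed as $P$ when $Z=0$ and as $Q$ when $Z=1$, so that the marginal of $X$ is exactly $M$. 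Under this coupling the displayed identity is precisely the mutual information $I(X;Z)$. Since $I(X;Z) = H(Z) - H(Z \mid X) \leq H(Z)$ and a fair coin has $H(Z) = 1$ bit, we conclude $D_{\text{JS}}(P\|Q) \leq 1$, which closes the stated range $[0,1]$.

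The main obstacle I anticipate is not the inequality chain but the bookkeeping around the logarithm base: the value $1$ is attainable only under $\log_2$, so I would state this convention at the outset and verify that the entropy reformulation is algebraically exact before invoking the mutual-information bound. A secondary point worth addressing, to show the range cannot be narrowed, is tightness: the upper value $1$ is approached when $P$ and $Q$ have disjoint supports, which forces $H(Z \mid X) = 0$ and hence $I(X;Z) = H(Z) = 1$, while the lower value $0$ is attained at $P = Q$.
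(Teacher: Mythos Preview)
Your argument is correct. The lower-bound half coincides with the paper's: both simply invoke Lemma~\ref{lm1} on the two KL summands. For the upper bound you take a genuinely different route. The paper does not derive the bound itself but appeals to an external result (Theorem~4 of \cite{shannon_div_pap}), namely the inequality $P_e(P,Q)\le \tfrac12\bigl(H(\pi_1,\pi_2)-JS(P\|Q)\bigr)$, and then rearranges using $P_e\ge 0$ and $H(\pi_1,\pi_2)\le 1$. You instead establish the entropy identity $D_{\text{JS}}(P\|Q)=H(M)-\tfrac12 H(P)-\tfrac12 H(Q)$ and read it as the mutual information $I(X;Z)$ for a fair Bernoulli selector $Z$, which is bounded by $H(Z)=1$ bit. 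Your approach is self-contained and more elementary, requiring no outside citation; it also makes the $\log_2$ convention explicit (the paper leaves this implicit) and supplies the tightness cases at both endpoints, which the paper omits. The paper's route, on the other hand, links the bound to a decision-theoretic quantity (Bayes error), which is less direct for the present purpose but situates the inequality in a broader context.
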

\begin{proof}
JS Divergence is computed as \cite{murphy_ml_book}
{\small
\begin{eqnarray}
\label{jsd_eqn}
    JSD(P\Vert Q) &=& \frac{1}{2}KLD(P\Vert M) + \frac{1}{2} KLD(Q\Vert M) \\
    \mbox{where~} M &=& \frac{1}{2}(P+Q)~~\mbox{(mixture distribution)}
\end{eqnarray}
}
From Lemma \ref{lm1}, both R.H.S. terms of (\ref{jsd_eqn}) are lowered bounded by 0, and hence JSD itself is lower bounded by 0. For the upper bound, we refer to Theorem 4 of \cite{shannon_div_pap}. Equation 4.3 therein states that
\begin{equation}
\label{p_e_eqn}
P_e(P,Q) ~\leq~ \frac{1}{2}\left( H(\pi_1, \pi_2) -JS(P\Vert Q)\right)
\end{equation}
for any $\pi_1$ and $\pi_2$ s.t. $\pi_1,~\pi_2~ \geq 0$, $\pi_1+\pi_2=1$, $H(\pi_1,\pi_2)$ is Shannon entropy function($\leq$ 1) and $P_e$ is Bayes (\textbf{$\geq 0$, non-negative}) probability of error. Reorganizing (\ref{p_e_eqn}), we have 

{\small
\begin{equation*}
JS(P\Vert Q) \leq H(\pi_1, \pi_2) - 2\cdot P_e(P,Q) \leq H(\pi_1, \pi_2) \leq 1
\end{equation*}
}
\end{proof}
\begin{theorem}
\label{mb_th}
The range of values that the scalar Mahalanobis distance measure takes is between 0 and \(\frac{(n - 1)p}{n} \) or $\frac{(n-1)^2}{n}$, depending on whether \textbf{n > (p+1)} or otherwise. Here, \textbf{n} is the number of paired samples and \textbf{p} is the length of each feature vector which represents each sample (given a fixed backbone).
\end{theorem}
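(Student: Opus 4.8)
The plan is to read the reported Mahalanobis measure as the sample-averaged squared Mahalanobis distance of the $n$ paired feature vectors $x_1,\dots,x_n \in \mathbb{R}^p$ from their mean $\bar{x}$, computed with the sample covariance $S = \frac{1}{n-1}\sum_{i=1}^n (x_i-\bar x)(x_i-\bar x)^\top$; that is, the aggregate $\frac{1}{n}\sum_{i=1}^n \delta_i$ with $\delta_i = (x_i-\bar x)^\top S^{-1}(x_i-\bar x)$ (replacing $S^{-1}$ by the Moore--Penrose pseudoinverse $S^{+}$ when $S$ is singular). The lower bound is immediate: $S$ is a Gram-type matrix and hence positive semi-definite, so $S^{-1}$ (resp.\ $S^{+}$) is positive semi-definite as well, which forces every quadratic form $\delta_i \geq 0$ and therefore the average $\geq 0$, with equality only in the degenerate case where all $x_i$ coincide with $\bar x$.

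For the upper bound I would exploit the cyclic invariance of the trace. First I would collapse the sum of the per-sample distances into a single trace,
\[
\sum_{i=1}^n \delta_i \;=\; \operatorname{tr}\!\Big(S^{-1}\sum_{i=1}^n (x_i-\bar x)(x_i-\bar x)^\top\Big)\;=\;(n-1)\,\operatorname{tr}\!\big(S^{-1}S\big),
\]
using $\sum_i (x_i-\bar x)(x_i-\bar x)^\top=(n-1)S$ from the definition of $S$. Since $\operatorname{tr}(S^{-1}S)=\operatorname{tr}(I_p)=p$ when $S$ is invertible, and more generally $\operatorname{tr}(S^{+}S)=\operatorname{rank}(S)$, the aggregate reduces to $\frac{n-1}{n}\operatorname{rank}(S)$. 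The whole problem thus collapses to determining the rank of the sample covariance.

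The final step is a case split on $\operatorname{rank}(S)$, which is squeezed by two ceilings: $\operatorname{rank}(S)\leq p$ because $S$ is $p\times p$, and $\operatorname{rank}(S)\leq n-1$ because the centred vectors obey the single linear dependency $\sum_i (x_i-\bar x)=0$ and hence span a subspace of dimension at most $n-1$. Consequently $\operatorname{rank}(S)\leq \min(n-1,p)$, with equality for data in general position. When $n>p+1$ the binding ceiling is $p$, yielding $\frac{(n-1)p}{n}$; otherwise ($n\leq p+1$) it is $n-1$, yielding $\frac{(n-1)^2}{n}$, and the two expressions coincide at $n=p+1$. I expect the main obstacle to be the rank-deficient regime $n\leq p+1$: there $S$ is singular, so I must justify replacing $S^{-1}$ by $S^{+}$, verify that each $x_i-\bar x$ lies in $\operatorname{range}(S)$ so that $\delta_i$ is well defined, and confirm that $\operatorname{tr}(S^{+}S)=\operatorname{rank}(S)=n-1$ rather than $p$. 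Pinning down the general-position assumption cleanly --- i.e.\ that the stated value is the supremum of the range, attained exactly when the centred vectors are maximally independent --- is the other delicate point, and it is precisely what makes ``between $0$ and [bound]'' the correct reading rather than a strict equality.
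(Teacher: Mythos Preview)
Your argument is self-contained, whereas the paper offers no proof of its own: it simply points to Theorems~2.1 and~3.1 of an external reference for the two regimes $n\le p+1$ and $n>p+1$. In that sense your route is genuinely different --- an elementary trace computation in place of a bare citation --- and, under your stated interpretation, the argument is correct.

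The concern is with that interpretation. Your trace identity does not give an inequality but an \emph{equality}: $\frac{1}{n}\sum_i\delta_i=\frac{n-1}{n}\operatorname{rank}(S)$. For data in general position this always hits the stated upper bound exactly, so the ``range'' degenerates to a single value determined by $n$ and the rank alone, independent of how the two underlying pattern distributions relate. That is hard to square with the paper treating the Mahalanobis figure as a data-dependent closeness score and reporting dataset-specific means of $37.2$, $102.7$, $28.3$ against upper bounds it computes via $(n-1)^2/n$. Both that downstream usage and the cited source point toward a per-sample quantity --- each $\delta_i$, in particular the largest --- rather than the average. For that reading your trace identity yields only $\sum_i\delta_i=(n-1)\operatorname{rank}(S)$, which is far too coarse to produce the sharp $(n-1)^2/n$ bound on a single $\delta_i$; obtaining that requires the finer linear-algebraic arguments the paper defers to. So while your computation is clean, it proves the bounds for a quantity different from the one the paper (via its citation) appears to have in mind.
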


\begin{proof}
We follow \cite{mah_bou_pap} for the proof. The details of both cases are derived in this paper. In the case of \textbf{n > (p+1)}, one may look at the proof in Theorem 3.1 therein. In the opposite case, one may look at the proof in Theorem 2.1 therein.
\end{proof}

\begin{proposition}
\label{ws_th}
For arbitrary distributions on countable spaces, the non-negative 2-Wasserstein Distance is unbounded from right.
\end{proposition}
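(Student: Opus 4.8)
The plan is to prove the statement by exhibiting an explicit one-parameter family of distribution pairs on a countable space whose $2$-Wasserstein distance diverges; unboundedness from the right is a purely existential claim, so a single divergent family suffices. The mechanism I would exploit is that a natural countable metric space such as $\mathbb{N}$ (or $\mathbb{Z}$) equipped with the induced metric $d(x,y)=|x-y|$ has \emph{infinite diameter}, and the Wasserstein distance inherits this unboundedness: by placing probability mass arbitrarily far apart, no coupling can keep the transport cost small.

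First I would fix the ambient space as $\mathbb{N}\subset\mathbb{R}$ with $d(x,y)=|x-y|$, which is countable, and recall the definition
\[
W_2(P,Q)=\Bigl(\inf_{\gamma\in\Gamma(P,Q)}\int |x-y|^2\,d\gamma(x,y)\Bigr)^{1/2},
\]
where $\Gamma(P,Q)$ denotes the set of couplings with marginals $P$ and $Q$. To avoid having to identify an optimal coupling, I would establish a clean lower bound valid for \emph{every} coupling: for any $\gamma\in\Gamma(P,Q)$, Jensen's inequality (equivalently Cauchy--Schwarz) gives $\int |x-y|^2\,d\gamma \ge \bigl(\int (x-y)\,d\gamma\bigr)^2=(\mu_P-\mu_Q)^2$, where $\mu_P,\mu_Q$ are the means of $P$ and $Q$. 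Taking the infimum over $\gamma$ and then the square root yields the dimension-free bound $W_2(P,Q)\ge |\mu_P-\mu_Q|$.

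With this bound in hand, the construction is immediate: for each $n\in\mathbb{N}$ take $P=\delta_0$ and $Q_n=\delta_n$, two point masses supported on the countable set $\{0,n\}\subset\mathbb{N}$. Their means are $0$ and $n$, so $W_2(P,Q_n)\ge n$, and since $n$ is arbitrary the distance is unbounded from the right. If one prefers genuinely non-degenerate distributions, the same argument applies verbatim to a fixed distribution $P=\sum_k p_k\delta_k$ and its shift $Q_n=\sum_k p_k\delta_{k+n}$, whose means differ by exactly $n$.

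I do not expect a genuine technical obstacle, since the claim is existential and the divergence is driven entirely by the infinite diameter of the base space. The only point requiring care is legitimacy of the example: one must confirm that the chosen measures are honest probability distributions with countable support, and that the lower bound is phrased as holding for \emph{every} coupling rather than appealing to a specific optimal one (which would otherwise require a separate optimality verification). Framing the mean-difference inequality uniformly over $\Gamma(P,Q)$ cleanly sidesteps that subtlety, after which the conclusion follows directly.
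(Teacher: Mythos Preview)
Your argument is correct and self-contained: the mean-difference lower bound $W_2(P,Q)\ge|\mu_P-\mu_Q|$ is valid for every coupling, and the family $(\delta_0,\delta_n)$ on $\mathbb{N}$ then witnesses unboundedness directly. The paper, however, does not give a constructive proof at all. Its ``Arguments'' paragraph instead appeals to an external result (Theorem~2.1(b) of a cited reference on empirical optimal transport) to observe that the best-known upper bounds on the $2$-Wasserstein distance between discrete distributions are themselves functions of the two underlying distributions, and hence cannot be uniform; from this it concludes that no absolute right bound exists. So the paper's route is a literature-based non-existence-of-a-universal-bound observation, whereas yours is an explicit divergent family. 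Your approach is strictly more elementary and more informative---it actually exhibits the mechanism (infinite diameter of the base space) rather than pointing to distribution-dependent bounds elsewhere---while the paper's approach has the minor advantage of tying the statement back to the empirical-distribution setting that motivates the surrounding discussion.
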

\noindent \textit{Arguments.} Since we use a finite number of pairs of samples on which we fit empirical p.d.f., we work with the limits on optimal transport on discrete distributions. Following Thm. 2.1(b) of \cite{empirical_ot_pap}, it is clear that the best-known limit today is itself dependent on the choice of two data distributions \textbf{r} and \textbf{s}, and hence is not absolute. The non-negativeness of any distance is a trivial fact. In view of this, we work with the fact that 2-WD is unbounded from right, while bounded with 0 from left.
\begin{flushright}
\(\square\)
\end{flushright}

\begin{figure*}[t]
\setlength\tabcolsep{1pt}
\settowidth\rotheadsize{CrackForest}
\begin{center}
\begin{tabularx}{\linewidth}{l XXX}
\rothead{Xformer (SWiNv1)}   &   \includegraphics[width=\hsize,valign=m,trim=0 0 0 20, clip]{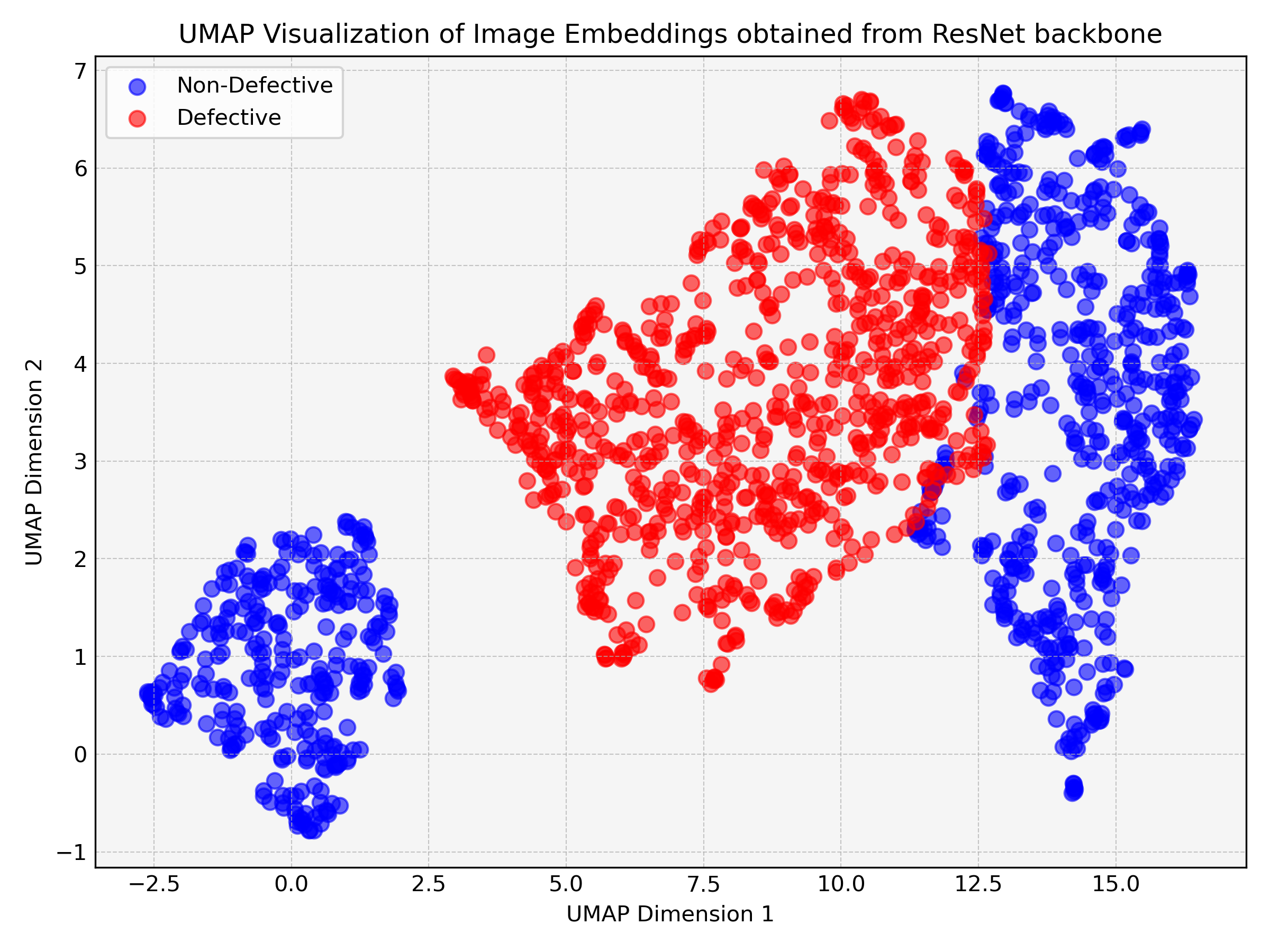}
                       &  \includegraphics[width=\hsize,valign=m,trim=0 0 0 20, clip]{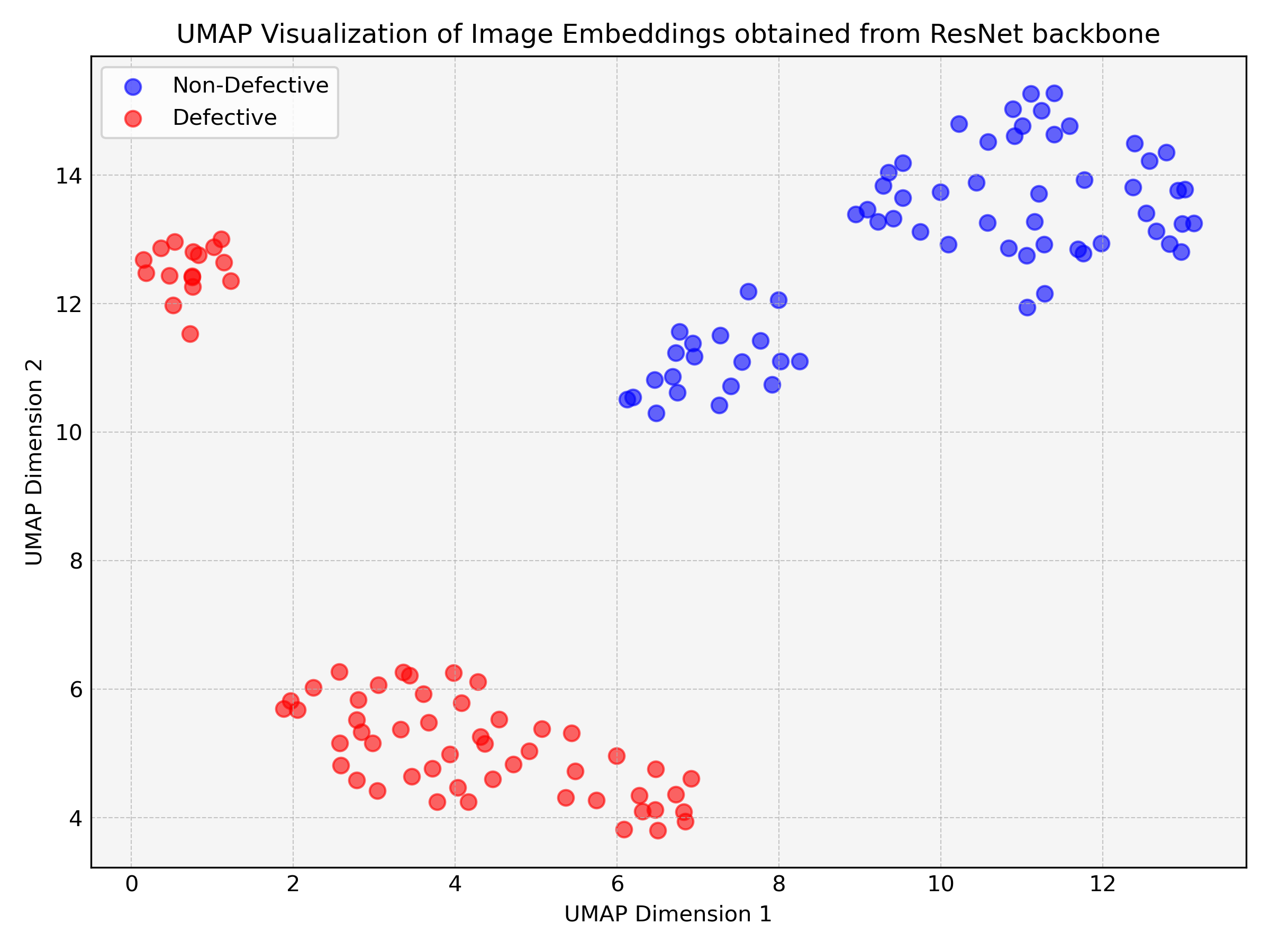} 
                       &  \includegraphics[width=\hsize,valign=m,trim=0 0 0 20, clip]{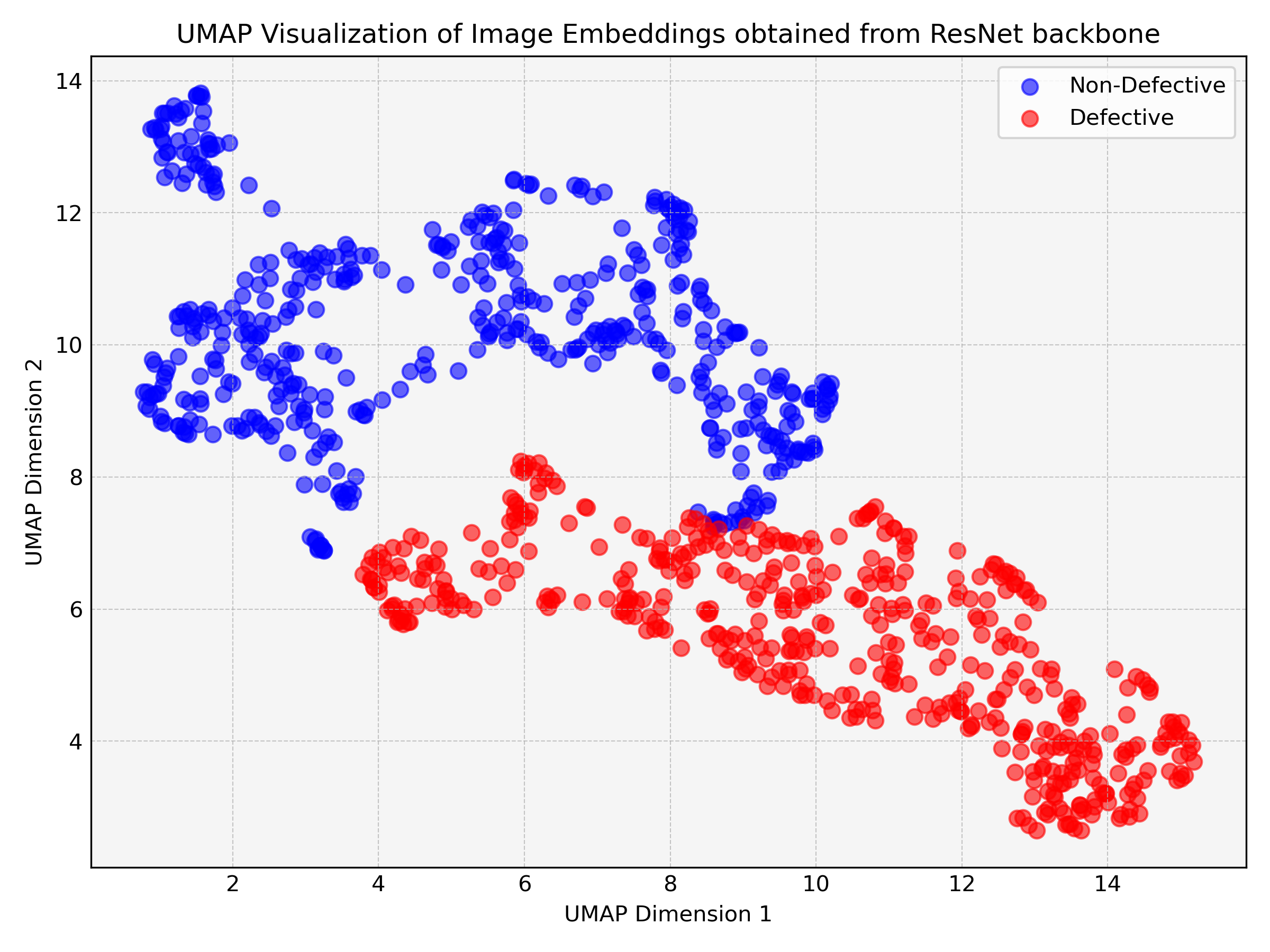}  \\  \addlinespace[4pt]
\rothead{CNN (DenseNet)}   &   \includegraphics[width=\hsize,valign=m,trim=0 0 0 20, clip]{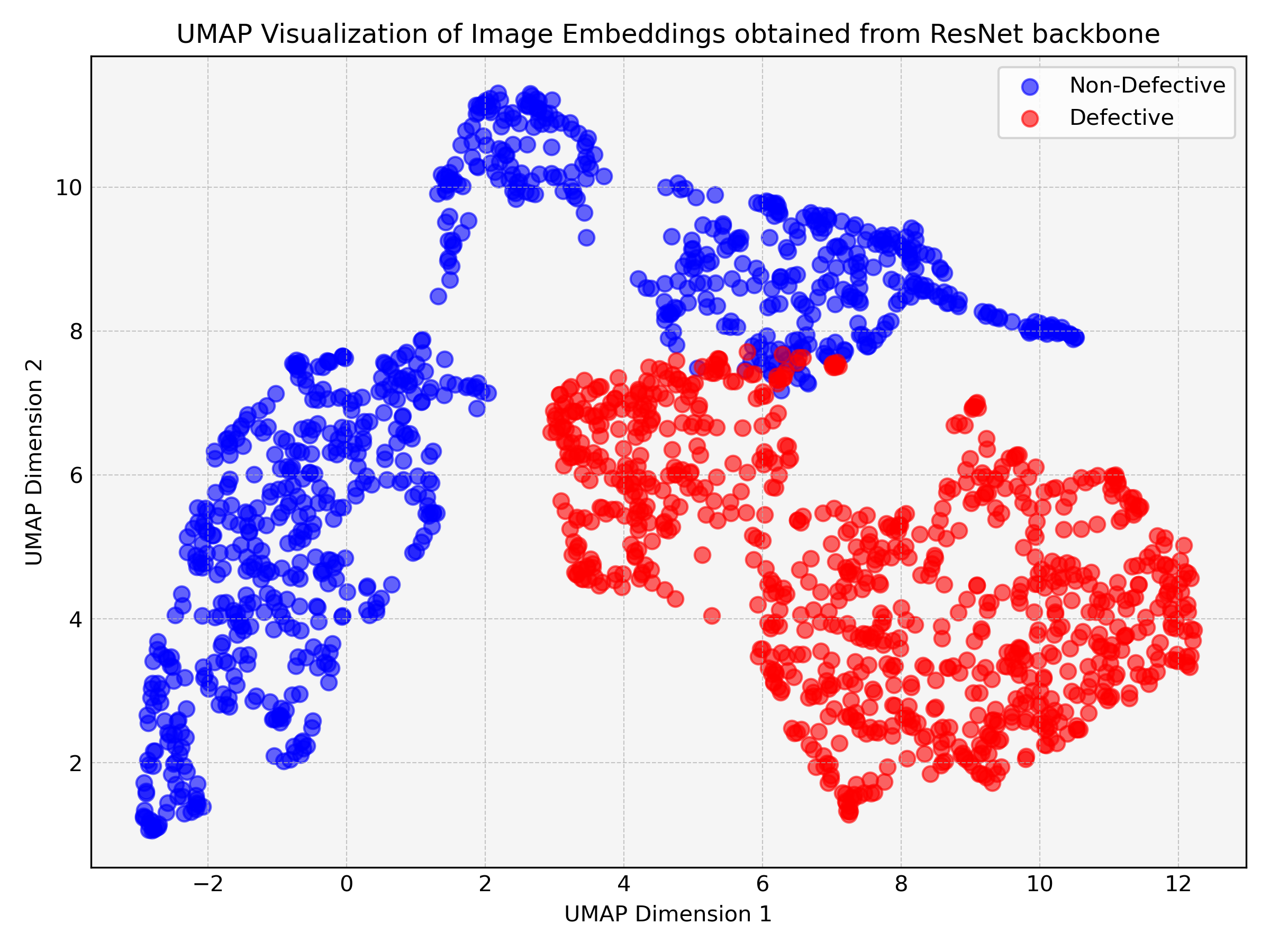}
                       &  \includegraphics[width=\hsize,valign=m,trim=0 0 0 20, clip]{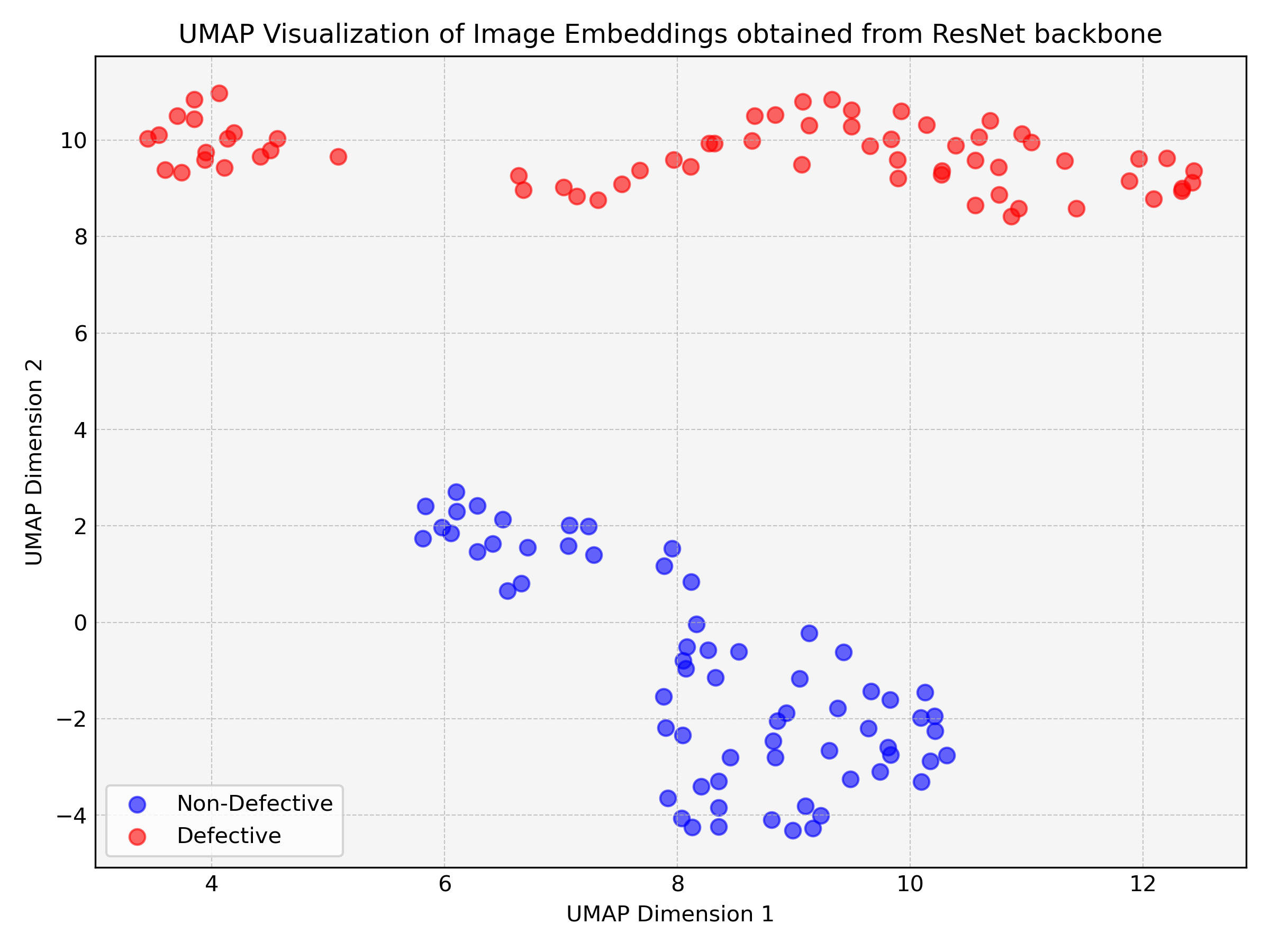} 
                       &  \includegraphics[width=\hsize,valign=m,trim=0 0 0 20, clip]{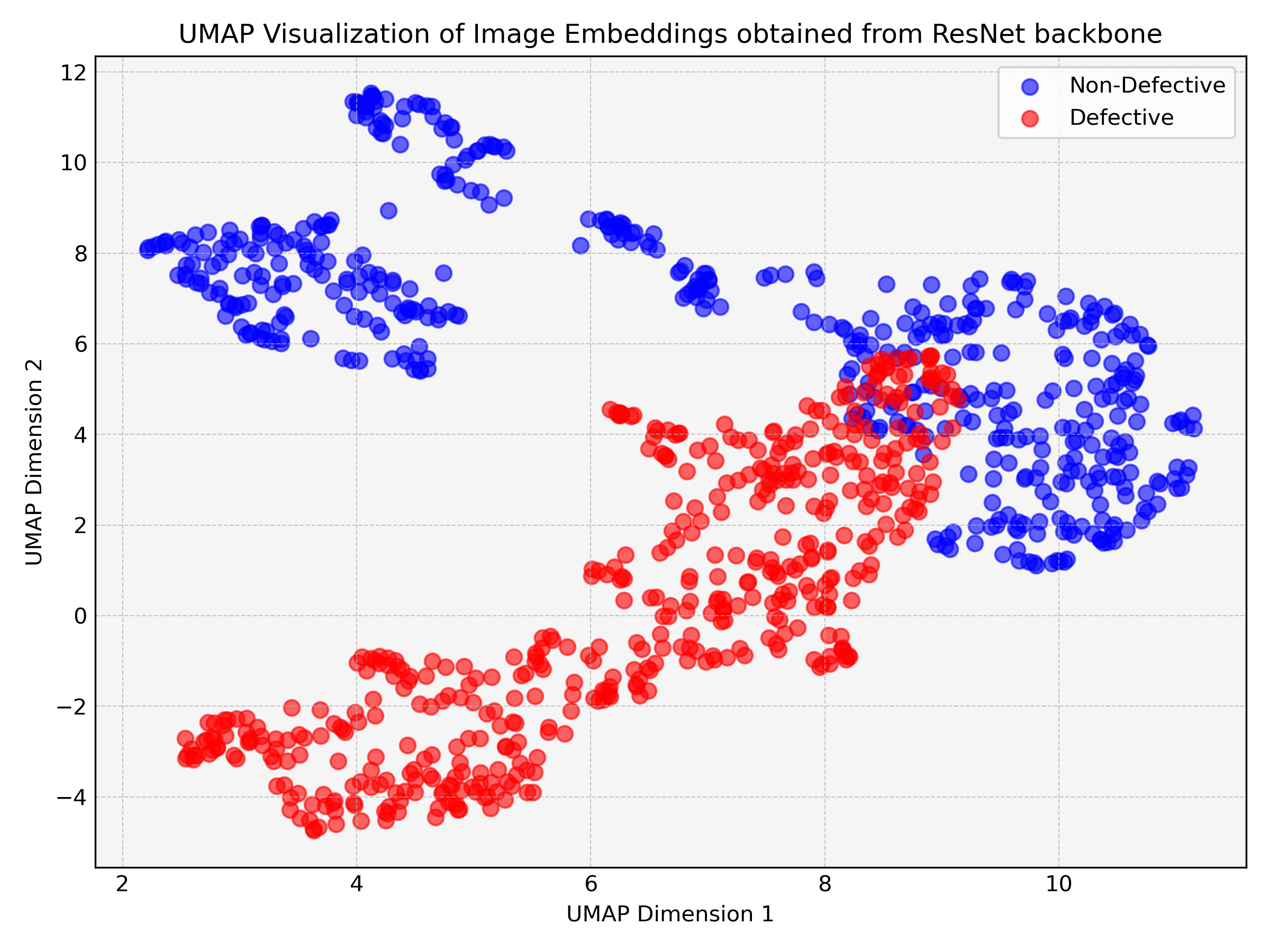}  \\  \addlinespace[4pt]
\rothead{Hybrid (PVTv2)}      &   \includegraphics[width=\hsize,valign=m,trim=0 0 0 20, clip]{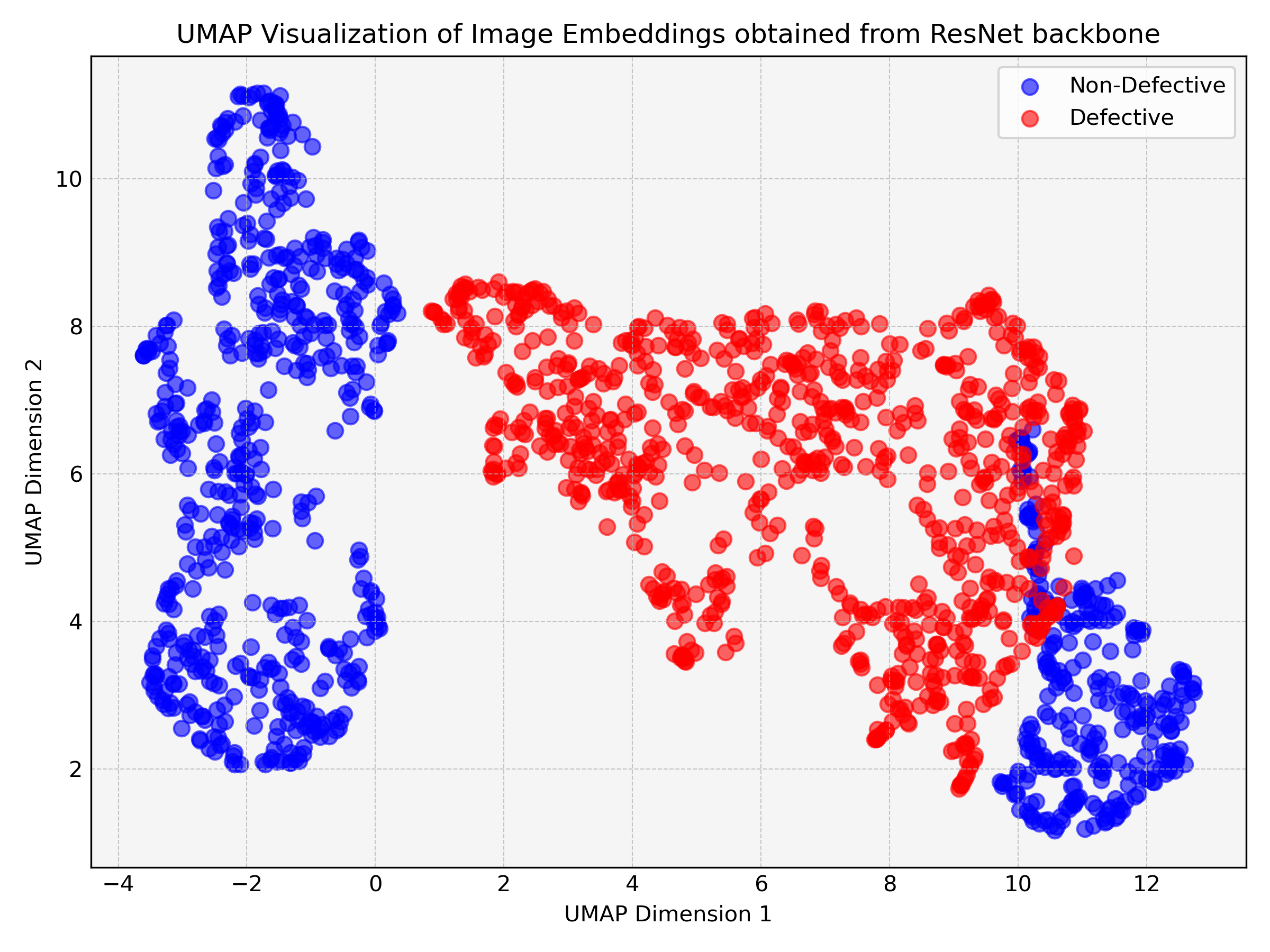}
                       &   \includegraphics[width=\hsize,valign=m,trim=0 0 0 20, clip]{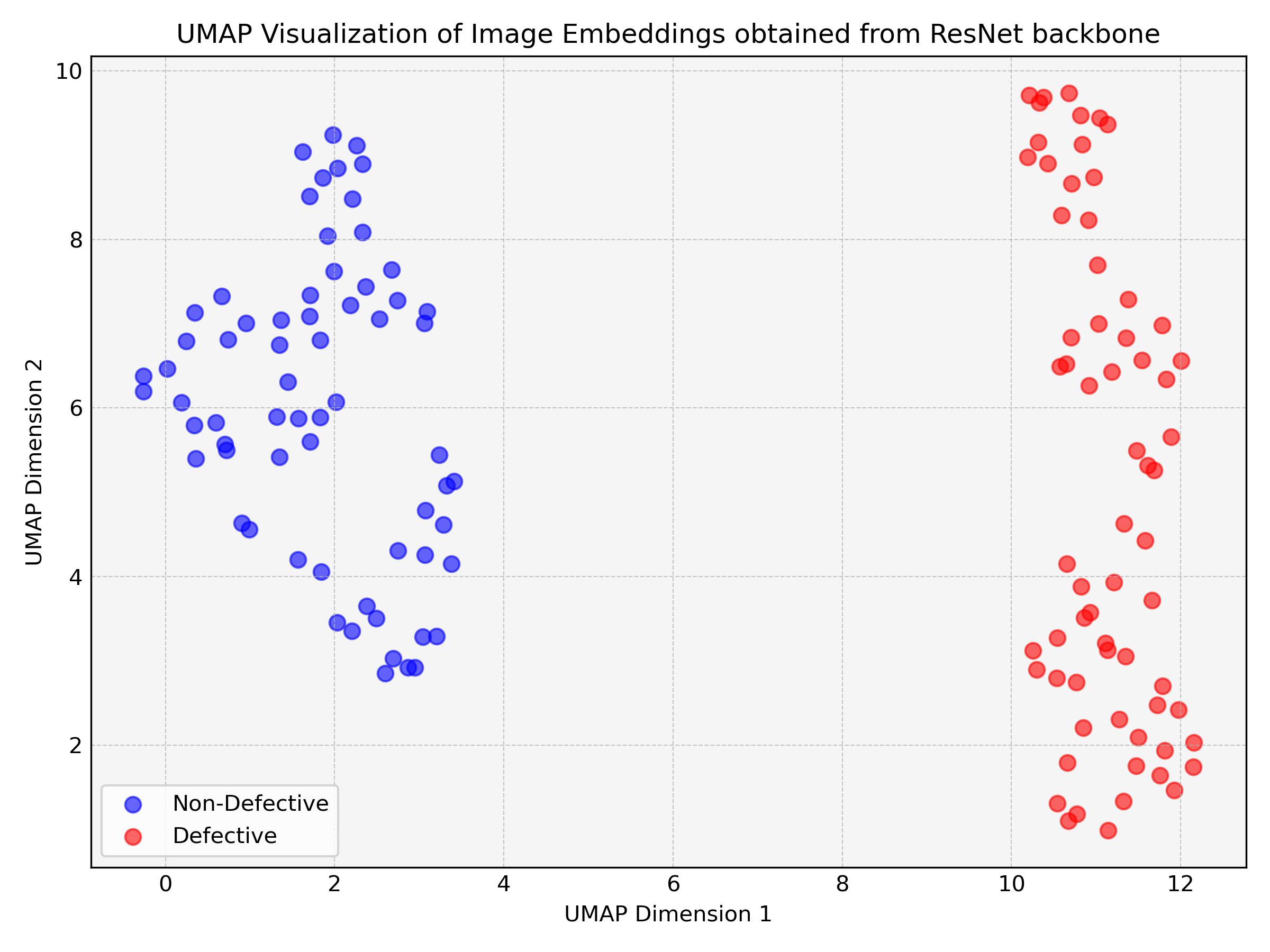}
                       &   \includegraphics[width=\hsize,valign=m,trim=0 0 0 20, clip]{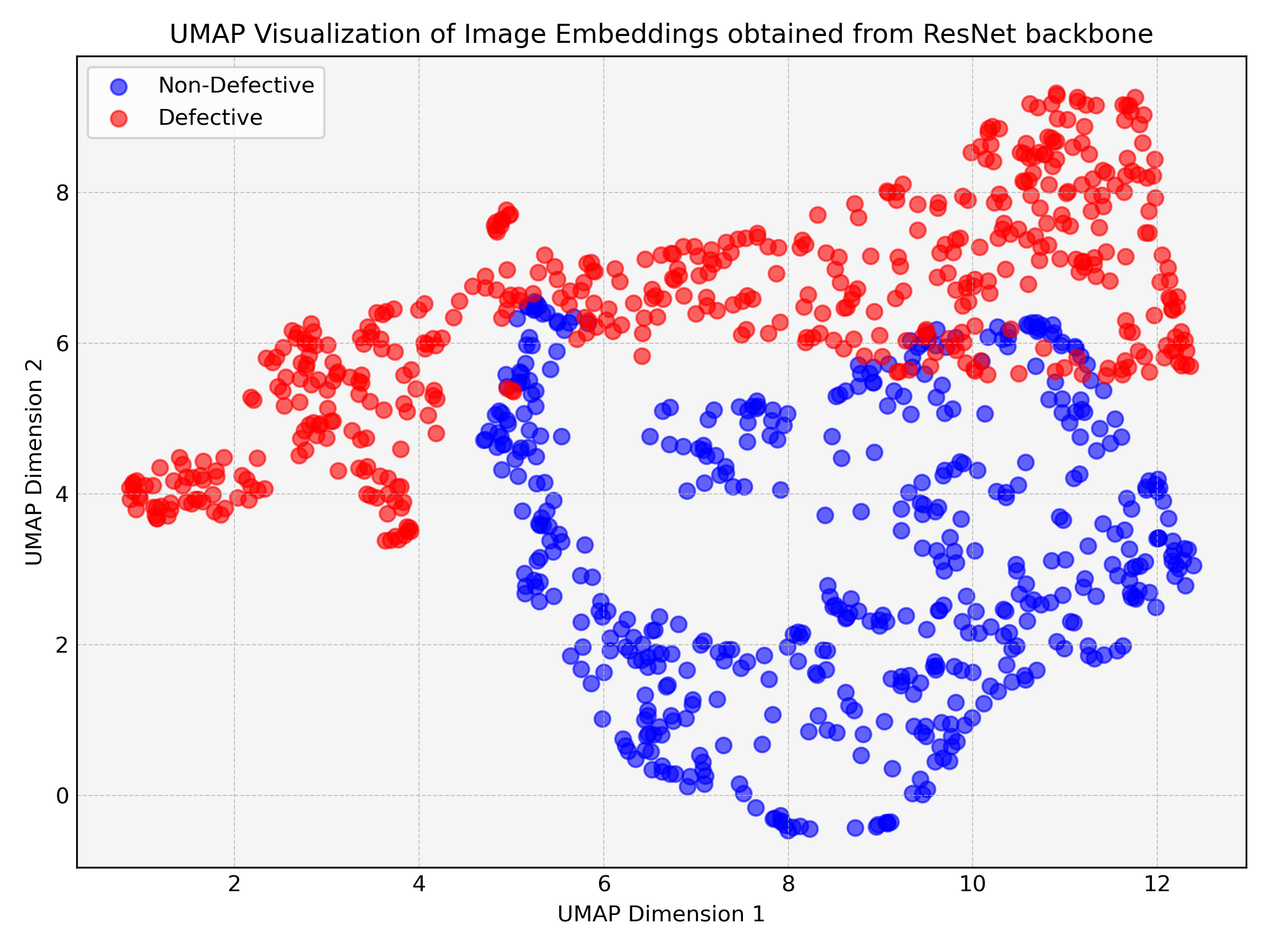} \\  \addlinespace[4pt]
                       & \centering $\langle$ NEU, Steel, Inclusion$\rangle$ & \centering $\langle$MVTec,Toothbrush, BadBristle$\rangle$ & \centering $\langle$BTAD,Ceramic Plate, Chipping$\rangle$
\end{tabularx}
\end{center}
\caption{Visualization of Cluster Relations Across Different types of backbones}
\label{umap_fig}
\end{figure*}

\begin{proposition}
\label{rmi_th}
Regionwise Mutual Information is unbounded from left, while bounded with +$\infty$ from right.
\end{proposition}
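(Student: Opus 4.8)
The plan is to use the fact that the RMI figure we report is not Shannon's mutual information itself but the Gaussian surrogate of \cite{rmi_ss_pap,rmi_sim_pap}, which is evaluated as a \emph{log-determinant} of a covariance matrix; once that is recognized, both claims follow from elementary facts about positive semi-definite matrices. First I would recall the RMI construction: the two paired patches are treated as samples of multivariate vectors, a joint Gaussian is fitted, and the regional information is taken (up to a fixed additive constant and a positive scale) as $-\tfrac{1}{2}\log\det\Sigma_{\mathrm{post}}$, where $\Sigma_{\mathrm{post}}=\Sigma_{D}-\Sigma_{DN}\Sigma_{N}^{-1}\Sigma_{ND}$ is the Schur-complement (posterior) covariance of the anomalous patch given its paired anomaly-free patch. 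This single reduction is the hinge of the whole argument.

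Second, I would read off the range of that scalar. Being a conditional covariance, $\Sigma_{\mathrm{post}}$ is symmetric positive semi-definite, so $\det\Sigma_{\mathrm{post}}\ge 0$ and, on the admissible open region where it is positive, $\log\det\Sigma_{\mathrm{post}}$ sweeps all of $(-\infty,+\infty)$. When the two patch representations become near-deterministically related the posterior spread collapses, $\det\Sigma_{\mathrm{post}}\to 0^{+}$, $\log\det\to-\infty$, and hence the surrogate $-\tfrac{1}{2}\log\det\to+\infty$; this realizes the right end, so no finite cap exists and $+\infty$ is the only bound on the right. When the residual variances grow without bound (weakly related, diffuse patches) we instead have $\det\Sigma_{\mathrm{post}}\to+\infty$, $\log\det\to+\infty$, and the surrogate $\to-\infty$, which is precisely unboundedness from the left.

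Third, I would address the apparent paradox that ``textbook'' mutual information is non-negative: that guarantee binds only the exact quantity $I(\cdot;\cdot)$, whereas what we measure is a \emph{lower-bound estimator} of it, and a lower bound on a non-negative quantity inherits no sign constraint of its own. Making this explicit is what licenses the negative, indeed arbitrarily negative, values asserted by the proposition.

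The hard part, I expect, is not the inequality but reconciling conventions: the segmentation losses in \cite{rmi_ss_pap,rmi_sim_pap} variously normalize by region size, add the constant $\tfrac{d}{2}\log(2\pi e)$, or clamp the diagonal for numerical stability, and any such clamping would artificially truncate the range. I would therefore pin down exactly which normalization of RMI our pipeline computes, and verify that the additive constant and the positive scaling only shift and rescale a set already equal to all of $\mathbb{R}$, and that, crucially, no implementation-level clipping is in force, before concluding that the measured RMI is genuinely unbounded on the left and capped only by $+\infty$ on the right.
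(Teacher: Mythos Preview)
Your argument is correct and follows essentially the same route as the paper: both reduce RMI to the surrogate $-\tfrac{1}{2}\log\det(\mathbf{M})$ for a positive semi-definite $\mathbf{M}$ and read off the range from the behaviour of $\log\det$ on PSD matrices (eigenvalues $\to 0$ gives $+\infty$, unbounded determinant gives $-\infty$). Your version is more explicit---naming $\mathbf{M}$ as the Schur-complement posterior covariance and addressing why the Gaussian lower-bound surrogate escapes Shannon MI's non-negativity---but the core mechanism is identical to the paper's.
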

\noindent \textit{Arguments.} We follow some of the observations from \cite{rmi_ss_pap}. The required metric is $-\frac{1}{2}\mbox{log}\left(\mbox{det}(\mathbf{M})\right)$. Also, as explained, $\mbox{log}\left(\mbox{det}(\mathbf{M})\right)$ =$\sum_{i=1}^{d}\mbox{log}\lambda_i$. $\mathbf{M}$ being a positive-semidefinite (PSD) matrix, all $\lambda_i \geq 0$. Hence the \textit{loose} upper bound on RMI is +$\infty$. While there are certain upper bounds on determinants of PSD matrices known in literature, they are all based on the row sum/column sum/determinant of the matrix itself, i.e. no absolute limit. In view of this, we work with the fact that 2-WD is unbounded from left.
\begin{flushright}
\(\square\)
\end{flushright}

\subsection{Findings}
We first look at various \textbf{JS Divergences} for specific anomaly
classes and corresponding normal surface patterns. The results are shown as
points on \textcolor{red}{\textbf{red curve}} in Figs.~\ref{ds_1},
\ref{ds_2}, \ref{ds_3} and \ref{ds_4}. One can see that values are \textbf{very low}. Given the theoretical range (c.f. Thm. \ref{js_th}) $0 \leq JS \leq 1$, values are low for \textbf{all} anomaly classes. Table \ref{bb_js_ab_tab} shows some sample values. It is clear that these values are $<<$ 0.01, which \textbf{strongly agrees} with our \textbf{closeness} hypothesis.

Next, we look at various \textbf{Wasserstein Distances}. The results are
shown as points on the \textcolor{blue}{\textbf{blue curve}} in
Figs.~\ref{ds_1}, \ref{ds_2}, \ref{ds_3} and \ref{ds_4}. One can see that values are again \textbf{very low}. The low value, given the theoretical \textit{open} range $[0,)$, is yet again true for \textbf{all} anomaly classes. From certain samples values in Table ~\ref{bb_ws_ab_tab}, these values are generally in range of $0.3\leq WS \leq 2.0$, which again \textbf{strongly agrees} with our hypothesis.

Finally, we look at various \textbf{Mahalanobis Distances}. The results are
shown as points on the \textcolor{green}{\textbf{green curve}} in
Figs.~\ref{ds_1}, \ref{ds_2}, \ref{ds_3} and \ref{ds_4}. From theorem \ref{mb_th}, and knowing that the maximum number of class-specific anomaly samples is 497 for BTAD, 884 for NEU and 71 for MVTec respectively, knowing that the typical value of length of feature vector \textbf{p} is near 1000, the upper bound can be easily derived as 495.1 for BTAD, 882.1 for NEU and 69.2 for MVTec datasets. On the other hand, the mean value of distances are around 37.2 for BTAD, 102.7 for NEU and 28.3 for MVTec (c.f. Figs.~\ref{ds_1}, \ref{ds_2}, \ref{ds_3}). Hence these classwise-\textbf{mean MH} distances stand at $\sim$7.5\% of corresponding upper bound for BTAD, 11.6\% for NEU and 40.9\% for NEU. These relative figures once more \textit{highly agree} with our hypothesis.

The RMI values have a thoeretical range of (,+$\infty$]. The maximum class-specific value that we see in Fig.~\ref{rmi_fig} is around 300 (400 for MVTec). So even RMI values, though not conclusively supportive, are not outside the range.

\textit{Qualitatively}, we look at various UMAP visualizations in Figure \ref{umap_fig} for random anomaly classes from different datasets, using random backbones of selective categories. We can clearly and \textbf{consistently} see the trend of nearness for the clusters of embeddings for \textbf{all} the anomaly and normal classes. As a side observation, even though margin of separation of clusters is low, the overlap is not high, which is \textbf{very encouraging} since it proves that \textit{discriminative modeling}, wherever possible and done, has got good results (e.g. numerous results on cracks). The discrimination is so \textbf{robust} that in case of MVTec, we can observe cleanly separated sub-clusters within anomaly cluster, hinting at presence of multiple fine-grained defects.

%

\begin{table*}[t]
\caption[~]{JS Div. Variation across Backbones, Pre-training Datasets, Object and Anomaly Types. Entries are of order of $\mathbf{10^{-4}}$}
\label{bb_js_ab_tab}
\begin{center}
\begin{tabular}{|c|p{.08\linewidth}|c|c|c|c|c|c|c|c|c|c|c|c|} \hline \hline
\multicolumn{2}{|c|}{~} & \multicolumn{6}{c|}{Big Transfer (BiT)} & \multicolumn{6}{c|}{SWiNv1} \\ \cline{3-14}
\multicolumn{2}{|c|}{~} & \multicolumn{2}{c|}{50x1} & \multicolumn{2}{c|}{50x3} & \multicolumn{2}{c|}{101x3} & \multicolumn{2}{c|}{Tiny} & \multicolumn{2}{c|}{Small} & \multicolumn{2}{c|}{Base} \\ \cline{3-14} 
\multicolumn{2}{|c|}{~} & IN1K\tablefootnote{IN stands for ImageNet} & IN21K & IN1K & IN21K & IN1K & IN21K & IN1K & IN22K & IN1K & IN22K & IN1K & IN22K \\ \hline \hline
\multirow{2}{*}{NEU} & steel, patches\tablefootnote{The object type and object-specific defect types have been \textbf{randomly} chosen} & 24 & 11 & 7 & 2 & 11 & 9 & 37.9 & 37.8 & 38.3 & 38.2 & 30 & 30 \\ \cline{2-14}
 & steel, crazing & 45 & 19 & 10 & 5 & 18 & 16 & 38.1 & 37.8 & 38.2 & 38.1 & 30.0 & 30.1 \\ \hline
\multirow{2}{*}{MVTec} & wood, hole & 69 & 75 & 30 & 35 & 27 & 35 & 38.6 & 39.1 & 38.9 & 39.2 & 30.2 & 30.6 \\ \cline{2-14}
& grid, bent & 67 & 67 & 27 & 20 & 22 & 21 & 38.7 & 39.6 & 39.1 & 39.5 & 30.2 & 30.7 \\ \hline
\multirow{1}{*}{BTAD} & '03' & 40 & 63 & 21 & 20 & 22 & 23 & 38.5 & 38.3 & 38.6 & 38.3 & 30.0 & 30.2 \\ \hline  \hline
\end{tabular}
\end{center}
\end{table*}

\begin{table*}[t]
\caption{Maha. Dist. Variation across Backbones, Pre-training Datasets, Obj. and Anomaly Types}
\label{bb_mh_ab_tab}
\begin{center}
\begin{tabular}{|c|p{.08\linewidth}|c|c|c|c|c|c|c|c|c|c|c|c|} \hline \hline
\multicolumn{2}{|c|}{~} & \multicolumn{6}{c|}{Big Transfer (BiT)} & \multicolumn{6}{c|}{SWiNv1} \\ \cline{3-14}
\multicolumn{2}{|c|}{~} & \multicolumn{2}{c|}{50x1} & \multicolumn{2}{c|}{50x3} & \multicolumn{2}{c|}{101x3} & \multicolumn{2}{c|}{Tiny} & \multicolumn{2}{c|}{Small} & \multicolumn{2}{c|}{Base} \\ \cline{3-14} 
\multicolumn{2}{|c|}{~} & IN1K & IN21K & IN1K & IN21K & IN1K & IN21K & IN1K & IN22K & IN1K & IN22K & IN1K & IN22K \\ \hline \hline
\multirow{2}{*}{NEU} & steel, patches & 162.3 & 179.6 & 316.7 & 318.5 & 332.4 & 302.8 & 164.0 & 190.2 & 120.9 & 132.1 & 158.4 & 181.6 \\ \cline{2-14}
 & steel, crazing & 197.5 & 206.2 & 323.7 & 391.5 & 365.5 & 358.4 & 374.1 & 697.3 & 216.2 & 213.8 & 360.1 & 862.6 \\ \hline
\multirow{2}{*}{MVTec} & wood, hole & 46.50 & 44.1 & 90.93& 89.20 & 103.3 & 99.61 & 11.53 & 10.11 & 11.81 & 7.31 & 7.46 & 7.64 \\ \cline{2-14}
& grid, bent & 64.06 & 78.56 & 99.18 & 145.6 & 107.9 & 145.0 & 20.80 & 12.10 & 24.32 & 9.71 & 8.15 & 9.00 \\ \hline
\multirow{1}{*}{BTAD} & '03' & 116.4 & 141.6 & 221.0 & 287.9 & 210.5 & 268.4 & 78.58 & 56.70 & 61.93 & 46.50 & 53.30 & 53.29 \\ \hline  \hline
\end{tabular}
\end{center}
\end{table*}

\subsection{Effect of Varying Backbones}
As another experiment, we varied the backbone model sizes and pretraining dataset sizes, to understand till what degree the choice of backbone matters. The results of randomly chosen anomaly classes are shown in Tables \ref{bb_js_ab_tab}, \ref{bb_ws_ab_tab} and \ref{bb_mh_ab_tab}. We can observe that as the backbone size increases, the distances decrease in general. Across all anomaly classes, we saw this trend around 62\% of the time. When architecture is fixed but the pretraining dataset size is increased, the distance consistently decreases, with rare exception. The cluster overlaps still remain encouraging, as we saw in Fig.~\ref{umap_fig}. We can conclude that the pretraining dataset size matters a lot in choice of backbone.

\begin{table*}[t]
\caption{Wass. Dist. Variation across Backbones, Pre-training Datasets, Obj. and Anomaly Types}
\label{bb_ws_ab_tab}
\begin{center}
\begin{tabular}{|c|p{.08\linewidth}|c|c|c|c|c|c|c|c|c|c|c|c|} \hline \hline
\multicolumn{2}{|c|}{~} & \multicolumn{6}{c|}{Big Transfer (BiT)} & \multicolumn{6}{c|}{SWiNv1} \\ \cline{3-14}
\multicolumn{2}{|c|}{~} & \multicolumn{2}{c|}{50x1} & \multicolumn{2}{c|}{50x3} & \multicolumn{2}{c|}{101x3} & \multicolumn{2}{c|}{Tiny} & \multicolumn{2}{c|}{Small} & \multicolumn{2}{c|}{Base} \\ \cline{3-14} 
\multicolumn{2}{|c|}{~} & IN1K & IN21K & IN1K & IN21K & IN1K & IN21K & IN1K & IN22K & IN1K & IN22K & IN1K & IN22K \\ \hline \hline
\multirow{2}{*}{NEU} & steel, patches & .5196 & .4090 & .6054 & .5029 & .6979 & .51936 & .5537 & .5659 & .7102 & .5113 & .7949& .7884 \\ \cline{2-14}
 & steel, crazing & .6740 & .4659 & .6844 & .5783 & .7639 & .5196  & .6950 &.7643 & .8397 & .5786 & .9958 & 1.184 \\ \hline
 
\multirow{2}{*}{MVTec} & wood, hole & 1.275 & 1.411 & 1.265 & 1.376 & 1.229 & 1.290 & 1.202 & 1.539 & 1.449 & 1.847 & 1.685 & 1.868 \\ \cline{2-14}
& grid, bent & 1.030 & 1.042 & 1.041 & 1.172 & .9998 & 1.094 & 1.214 & 1.730 & 1.722 & 1.856 & 1.775 & 1.815 \\ \hline
\multirow{1}{*}{BTAD} & '03' & .58 & .56 & .76 & .73 & .79 & .74 & .938 & .915 & 1.20 & .926 & .869 & 1.06 \\ \hline  \hline
\end{tabular}
\end{center}
\end{table*}

\begin{table*}[h]
\caption{Measurements using Backbones trained on \textbf{Non-Imagenet} Datasets}
\label{non_imagenet_tab}
\begin{center}
\begin{tabular}{|p{.08\linewidth}|c|c|c|c|c|c|c|c|c|} \hline \hline
\multicolumn{1}{|c|}{~} & \multicolumn{3}{c|}{Dinov2} & \multicolumn{3}{c|}{OpenCLIP} & \multicolumn{3}{c|}{SWAG} \\ 
    \multicolumn{1}{|c|}{~} & \multicolumn{3}{c|}{trained using \textbf{LVD-142M}} & \multicolumn{3}{c|}{trained using \textbf{LAION-2B}} & \multicolumn{3}{c|}{trained using \textbf{IG3.6B}} \\  \cline{2-10}
\multicolumn{1}{|c|}{~} & JS Div & Maha. Dist & Wass. Dist & JS Div & Maha. Dist & Wass. Dist & JS Div & Maha. Dist & Wass. Dist \\ \hline \hline
 NEU & 0.00026 & 7.67283 & 0.03414 & 0.00454 & 34.99813 & 0.13339 & 0.00375 & 40.03888 & 0.20525 \\ \hline
MVTec & 0.00991 & 15.82391 & 0.93268 & 0.00488 & 27.13522 & 0.58536 & 0.00398 & 30.58091 & 1.25381 \\ \hline
BTAD & 0.00977 & 58.58314 & 0.97567 & 0.00477 & 43.01020 & 0.43319 & 0.00387 & 60.68180 & 0.84217 \\ \hline
CB & 0.00226 & 42.53444 & 0.06645 & 0.00466 & 32.53861 & 0.29289 & 0.00384 & 34.28002 & 0.5946 \\ \hline  \hline
\end{tabular}
\end{center}
\end{table*}

\subsubsection{Effect of Changing Backbone Dataset}
As is known \cite{bbone_battle_pap}, variants of ImageNet dataset have remained at the core of training of most backbones. However, in last few years, continuously bigger datasets have arisen, and sporadic backbones have also been published using them. To \textit{avoid bias propagation} from ImageNet dataset reflecting into our hypothesis, we show in Table \ref{non_imagenet_tab} that for backbones trained on other datasets also, the distances and divergences remain quite low. This reaffirms the fact that our hypothesis is quite generic.

\subsection{Statistical Significance of Hypothesis}
To conclude our analysis, we also undertake our hypothesis through significance testing.
Statistical significance test is carried out to compute the probability (\textbf{p-value}) and establish that the relationship between two variables is not a chance event. The random variables in question here relate to anomalous and anomaly-free FG/BG patterns. Since our measurements of the random variables are in form of distances and divergences, we define the \textbf{null hypothesis $H_0$} as the measurements being \textbf{of-same-order/comparable} when taken between anomalous and anomaly-free samples, vis-a-vis when taken between anomalous samples and samples from an uncorrelated domain (e.g. object background). The \textbf{alternative hypothesis $H_1$} hence tantamounts to measurements being \textbf{not-of-same-order/incomparable} when taken between anomalous and anomaly-free samples, vis-a-vis when taken between anomalous samples and samples from object background. We choose object background for the unrelated domain, since sampling regions from it is quite easy.

\begin{figure*}[h]
\begin{center}
\includegraphics[trim=80 40 100 80,clip,width=.46\textwidth]{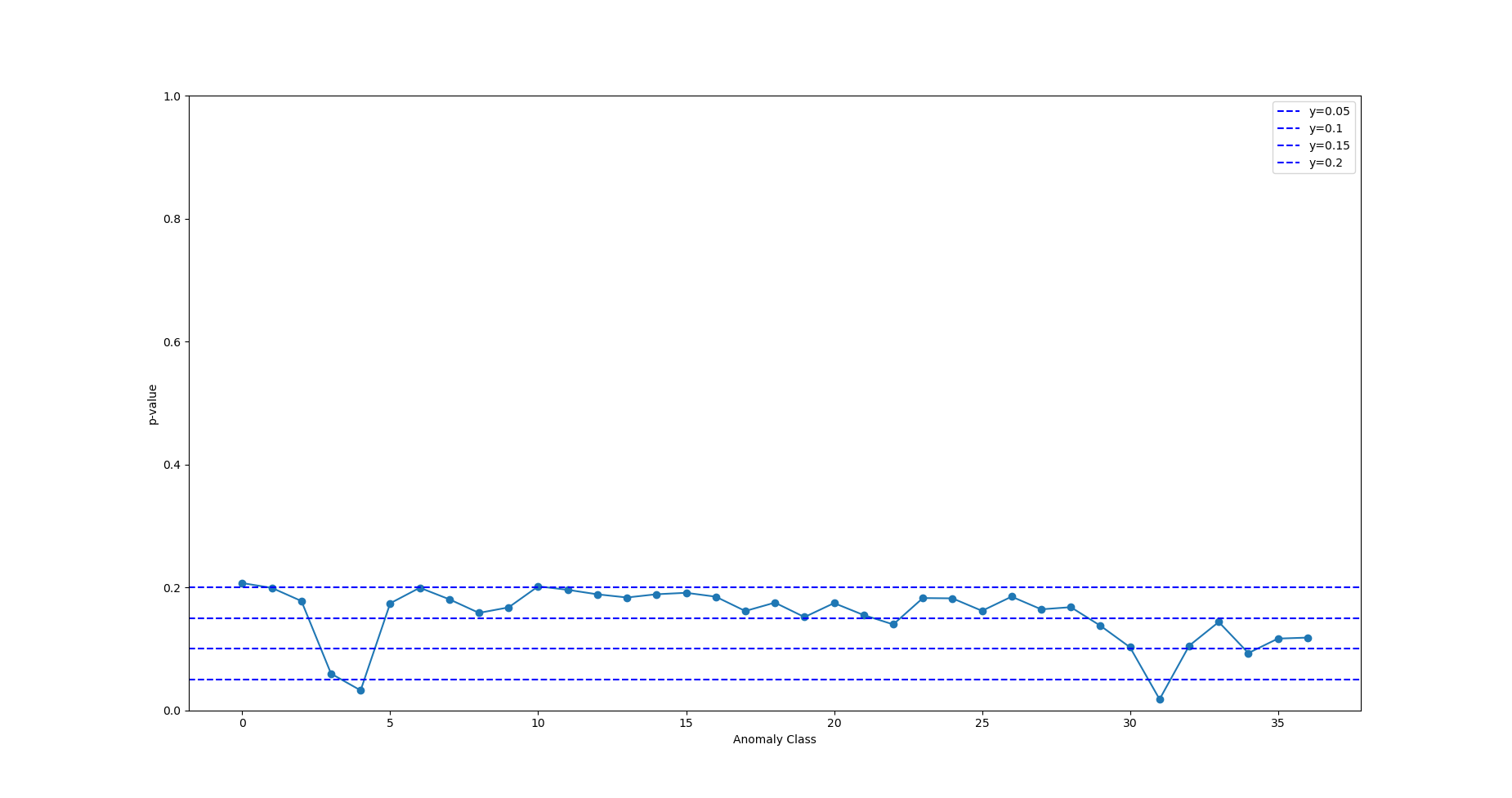}
\caption{P-values for various anomaly classes, object foregrounds and backgrounds in MVTec}
\label{pval_fig}
\end{center}
\end{figure*}

Since the population data is neither nominal nor ordinal, we employ \textbf{one-tailed} t-test instead of Chi-square test. We carry out a two-sample t-test rather than a paired t-test\footnote{Hence we involve samples from object background}, since we cannot generate paired data of distance measurements for our experiment. We consider the \textit{homoscedastic} case. In section \ref{dataprep_sec}, we had introduced two sample sets $\mathrm{D_{r,i,c,s}}$ and $\mathrm{N_{r,i,o_{c},s}}$, and the first group of measurements, $\mathbf{D_{A,FG}}$ was done between these sets. We can similarly define foreground-paired background sample set as $\mathrm{N_{r,i,bg_{c},s}}$, and have another group of measurements $\mathbf{D_{A,BG}}$ between $\mathrm{D_{r,i,c,s}}$ and $\mathrm{N_{r,i,bg_{c},s}}$. Assuming that the number of samples/cropped region are all same across the three sets, the two groups of measurements will also have same cardinality i.e. $n_1$ $=$ $n_2$. Hence, we first calculate the t-statistic\footnote{given low amount of anomaly samples, we employ Student t-statistic instead of Z-statistic} as follows
\begin{equation}
    t~=~\frac{\mu_{D_{A,FG}}-\mu_{D_{A,BG}}}{s_p*\left(\sqrt{\frac{1}{n_1}+\frac{1}{n_2}}\right) }
\end{equation}
\noindent where
\begin{equation}
    s_p~=~\sqrt{\frac{\left(n_1-1\right)*\sigma_{D_{A,FG}}^2+\left(n_2-1\right)\sigma_{D_{A,BG}}^2}{\left(n_1+n_2-2\right)}}
\end{equation}
Here, $\mu$ and $\sigma$ denote the mean and variance of each group of measurements. Given the t-score thus computed, and degrees of freedom $df~=~\left(n_1+n_2-2\right)$, we use the standard t-distribution tables \cite{stat_tab_book} to look up the p-value as the column head.

For MVTec dataset, p-values for various anomaly classes is shown in Fig~\ref{pval_fig}. The distance groups are based on shortlisted backbones which give maximum separation. As can be seen, the values mostly fall in the range of 0.15 and 0.20. Since these values are not smaller than the popular alpha value of 0.05, if one considers statistics alone, strictly rejecting null hypothesis may not be possible. However, our population data is quite small (few tens of samples) since anomalies arise scarcely in practice. Hence, at a low level of $\sim$0.15, given the compelling experimental evidences earlier, we still consider our hypothesis to be significant.

\begin{table*}[h]
\caption{Image-level Transfer Learning Performance using Comparable Datasets}
\label{tl_img_tab}
\begin{center}
\begin{adjustbox}{width={\textwidth},totalheight={\textheight},keepaspectratio,}%
\begin{tabular}{|c|c|c|c|c|c|c|c|c|c|c|c|c|c|} \hline \hline
 & bottle & cable & carpet & hazelnut & leather & metal\_nut & pill & screw & tile & toothbrush & transistor & wood & zipper \\ \hline \hline
ImageNet-15 & 92.87          & 56.41          & 48.8           & 94.03          & \textbf{80.94} & 42.46          & 68.75          & 64.33         & \textbf{90.45} & 67.5           & 60.43          & \textbf{95.77} & \textbf{70.51} \\ \hline
Mvtec-15    & \textbf{93.58} & \textbf{56.99} & \textbf{52.84} & \textbf{95.69} & 79.14          & \textbf{42.59} & \textbf{69.29} & \textbf{68.3} & 88.81          & \textbf{67.75} & \textbf{62.76} & 95.06          & 68.24 \\ \hline \hline
\end{tabular}
\end{adjustbox}
\end{center}
\end{table*}

\begin{table*}[h]
\caption{Pixel-level Transfer Learning Performance using Comparable Datasets}
\label{tl_pix_tab}
\begin{center}
\begin{adjustbox}{width={\textwidth},totalheight={\textheight},keepaspectratio,}%
\begin{tabular}{|c|c|c|c|c|c|c|c|c|c|c|c|c|c|} \hline \hline
 & bottle & cable & carpet & hazelnut & leather & metal\_nut & pill & screw & tile & toothbrush & transistor & wood & zipper \\ \hline \hline
ImageNet-15 & 88.36          & \textbf{89.47} & \textbf{73.03} & 97.78          & 90.26          & 83.63         & \textbf{93.82} & 93.66          & \textbf{86.53} & \textbf{92.79} & \textbf{82} & 82.59          & 74.05 \\ \hline
Mvtec-15    & \textbf{90.86} & 89.34          & 70.95          & \textbf{97.98} & \textbf{91.66} & \textbf{88.7} & 93.09          & \textbf{94.11} & 83.39          & 91.81          & 79.51       & \textbf{82.84} & \textbf{84.45} \\ \hline \hline
\end{tabular}
\end{adjustbox}
\end{center}
\end{table*}

\section{Employing the Hypothesis in Real-life AD}
\label{tl_sec}
In this section, we give \textit{first} proof that our hypothesis indeed
has practical, implementational advantage. Especially for domain adaptation
(DA) though not for anomaly detection, there are many works that point to
the fact that using a source domain that is nearby to target domain(e.g.
sketches in relation with paintings) give improved performance(e.g.,
\cite{d3t_gan_pap}, \cite{dwsc_pap}). Motivated with this, we tried to
carry out similar experiment in context of few-shot anomaly detection using
transfer learning, a \textit{generalization} of DA. Specifically, we use a
very recent and popular model, RegAD \cite{reg_ad_pap} as baseline, and try
to replace the inner ImageNet-pretrained ResNet-18 backbone with a backbone
trained from anomaly-free patches of objects from MVTec dataset. Other than
matching the data, e.g. 15 classes only, 1300 samples per class, we do
\textbf{no other change}, especially in model hyperparameters. We report
results on a majority of classes within MVTec, \textbf{without any
hyperparameter tuning}. As can be seen from tables \ref{tl_img_tab} and \ref{tl_pix_tab}, the \textit{extended} model
\textit{surpasses} the baseline for most classes, and the performance gap
for remaining classes is narrow. The gap is quite pronounced for the image-level anomaly detection task. It is obvious that with more
careful integration of our hypothesis, the performance of modified RegAD
model will indeed establish a new baseline. We expect similar improved
performance over other tasks and models as well.

\section{Possible Impacts and Conclusion}
In this paper, we have claimed an important hypothesis that the space of anomaly-free visual patterns of the normal samples correlates well with each of the various spaces of patterns of the class-specific anomaly samples. We worked with latent representations of class-specific visually anomalous regions and normal regions, using
various backbones. We varied the anomaly datasets to cover both indoor and outdoor inspections, \textbf{70+} anomaly classes, employed \textbf{50+} popular backbones of various types which are trained over diverse source datasets, employ various image-space and latent-space metrics to exhaustively, experimentally verify the hypothesis both qualitatively and quantitatively. Other than very few outliers, we find this hypothesis to be generally true. We also found that the corresponding clusters, even when close, had very low overlaps. 
The experimental results were further formally verified using statistical hypothesis testing that employs Student's t-distribution.
Going a step further, we have unambiguously established the first
encouraging results of using our hypothesis, without any bells and
whistles, to establish a new AD baseline, by extending RegNet. Hence we expect that our finding of a simple closeby domain that readily entails a large number of samples, and which also oftentimes shows interclass separability though with narrow margins, will be a useful discovery. Especially, it is expected to improve domain adaptation for anomaly detection, and few-shot learning for anomaly detection, nudging the future AD solutions to truly approach in-the-wild settings.

\bibliography{article}

\begin{thebibliography}{10}

\bibitem{murphy_ml_book}
K.~P. Murphy.
\newblock {\em {Machine Learning: A Probabilistic Perspective}}.
\newblock MIT press, 2012.

\bibitem{found_ml_book}
M.~Mohri and A.~Rostami.
\newblock {\em {Foundations of Machine Learning}}.
\newblock MIT press, 2018.

\bibitem{dl_book}
I.~Goodfellow, Y.~Bengio, and A.~Courville.
\newblock {\em Deep learning}.
\newblock MIT press, 2016.

\bibitem{pr_bishop_book}
C.~M. Bishop and N.~M. Nasrabadi.
\newblock {\em {Pattern Recognition and Machine Learning}}.
\newblock Springer, 2006.

\bibitem{stat_tab_book}
W.~H. Beyer.
\newblock {\em {Handbook of Tables for Probability and Statistics}}.
\newblock CRC Press, 2019.

\bibitem{rel_conflict_pap}
Y.~Wang and J.~Kleinberg.
\newblock {On the Relationship Between Relevance and Conflict in Online Social
  Link Recommendations}.
\newblock {\em Advances in Neural Information Processing Systems}, 2024.

\bibitem{phy_crack_pap}
D.~Melching, E.~Schultheis, and E.~Breitbarth.
\newblock {Generating Artificial Displacement Data of Cracked Specimen using
  Physics-guided Adversarial Networks}.
\newblock {\em Machine Learning: Science and Technology}, 4(4):45--63, 2024.

\bibitem{transfer_al_pap}
H.~He, S.~Erfani, M.~Gong, and Q.~Ke.
\newblock {Learning Transferable Representations for Image Anomaly Localization
  Using Dense Pretraining}.
\newblock In {\em IEEE Winter Conference on Applications of Computer Vision},
  pages 1113--1122, 2024.

\bibitem{bbone_battle_pap}
M.~Goldblum, H.~Souri, R.~Ni, M.~Shu, V.~Prabhu, G.~Somepalli,
  P.~Chattopadhyay, M.~Ibrahim, A.~Bardes, J.~Hoffman, et~al.
\newblock {Battle of the Backbones: A Large-scale Comparison of Pretrained
  Models across Computer Vision Tasks}.
\newblock {\em Advances in Neural Information Processing Systems}, 2024.

\bibitem{vad_survey_pap}
Y.~Cao, X.~Xu, J.~Zhang, Y.~Cheng, X.~Huang, and G.~Pang.
\newblock {A Survey on Visual Anomaly Detection: Challenge, Approach, and
  Prospect}.
\newblock {\em arXiv preprint arXiv:2401.16402}, 2024.

\bibitem{fsig_incomp_pap}
Y.~Zhao, C.~Du, T.~Pang, M.~Lin, S.~Yan, and N.-M. Cheung.
\newblock {Exploring Incompatible Knowledge Transfer in Few-shot Image
  Generation}.
\newblock In {\em IEEE Conference on Computer Vision and Pattern Recognition},
  pages 7380--7391, 2023.

\bibitem{class_rel_pap}
Y.~Zhang, Z.~Wang, and W.~He.
\newblock {Class Relationship Embedded Learning for Source-free Unsupervised
  Domain Adaptation}.
\newblock In {\em IEEE Conference on Computer Vision and Pattern Recognition},
  pages 7619--7629, 2023.

\bibitem{d3t_gan_pap}
X.~Wu, H.~Wang, Y.~Wu, and X.~Li.
\newblock {D3T-GAN: Data-Dependent Domain Transfer GANs for Image Generation
  with Limited Data}.
\newblock {\em ACM Trans. Multimedia Comput. Commun. Appl.}, 19(4), 2023.

\bibitem{ms_ad_pap}
T.~Reiss and Y.~Hoshen.
\newblock {Mean-shifted Contrastive Loss for Anomaly Detection}.
\newblock In {\em AAAI Conference on Artificial Intelligence}, pages
  2155--2162, 2023.

\bibitem{tcvae_pap}
C.~Meo, L.~Mahon, A.~Goyal, and J.~Dauwels.
\newblock {TC-VAE: On The Relationship between Disentanglement and Diversity}.
\newblock In {\em International Conference on Learning Representations}, 2023.

\bibitem{exp_pred_pap}
A.-H. Karimi, K.~Muandet, S.~Kornblith, B.~Sch{\"o}lkopf, and B.~Kim.
\newblock {On the Relationship Between Explanation and Prediction: A Causal
  View}.
\newblock In {\em International Conference on Machine Learning}, pages
  15861--15883, 2023.

\bibitem{rel_kd_pap}
P.~Gu, C.~Zhang, R.~Xu, and X.~He.
\newblock {Class-relation Knowledge Distillation for Novel Class Discovery}.
\newblock In {\em IEEE International Conference on Computer Vision}, pages
  16474--16483, 2023.

\bibitem{saa_pap}
Y.~Cao, X.~Xu, C.~Sun, Y.~Cheng, Z.~Du, L.~Gao, and W.~Shen.
\newblock {Segment Any Anomaly without Training via Hybrid Prompt
  Regularization}.
\newblock {\em arXiv preprint arXiv:2305.10724}, 2023.

\bibitem{ad_dist_shift_pap}
T.~Cao, J.~Zhu, and G.~Pang.
\newblock {Anomaly Detection under Distribution Shift}.
\newblock In {\em IEEE International Conference on Computer Vision}, pages
  6488--6500, 2023.

\bibitem{blend_latent_pap}
O.~Avrahami, O.~Fried, and D.~Lischinski.
\newblock {Blended Latent Diffusion}.
\newblock {\em ACM Transactions on Graphics}, 42(4):1--11, 2023.

\bibitem{visa_pap}
Y.~Zou, J.~Jeong, L.~Pemula, D.~Zhang, and O.~Dabeer.
\newblock {Spot-the-difference Self-supervised Pre-training for Anomaly
  Detection and Segmentation}.
\newblock In {\em European Conference on Computer Vision}, pages 392--408,
  2022.

\bibitem{blend_adsynth_pap}
H.~M. Schl{\"u}ter, J.~Tan, B.~Hou, and B.~Kainz.
\newblock {Natural Synthetic Anomalies for Self-supervised Anomaly Detection
  and Localization}.
\newblock In {\em European Conference on Computer Vision}, pages 4--9, 2022.

\bibitem{bhat_sep_pap}
M.~P{\'a}ndy, A.~Agostinelli, J.~Uijlings, V.~Ferrari, and T.~Mensink.
\newblock {Transferability Estimation using Bhattacharyya Class Separability}.
\newblock In {\em IEEE Conference on Computer Vision and Pattern Recognition},
  pages 9172--9182, 2022.

\bibitem{cfa_pap}
S.~Lee, S.~Lee, and B.~C. Song.
\newblock {CFA: Coupled-hypersphere-based Feature Adaptation for
  Target-oriented Anomaly Localization}.
\newblock {\em IEEE Access}, 10, 2022.

\bibitem{reg_ad_pap}
C.~Huang, H.~Guan, A.~Jiang, Y.~Zhang, M.~Spratling, and Y.-F. Wang.
\newblock {Registration based Few-shot Anomaly Detection}.
\newblock In {\em European Conference on Computer Vision}, pages 303--319,
  2022.

\bibitem{dwsc_pap}
X.~Hou, B.~Liu, S.~Zhang, L.~Shi, Z.~Jiang, and H.~You.
\newblock {Dynamic Weighted Semantic Correspondence for Few-shot Image
  Generative Adaptation}.
\newblock In {\em ACM International Conference on Multimedia}, pages
  1214--1222, 2022.

\bibitem{ad_review_pap}
L.~Ruff, J.~R. Kauffmann, R.~A. Vandermeulen, G.~Montavon, W.~Samek, M.~Kloft,
  T.~G. Dietterich, and K.-R. M{\"u}ller.
\newblock {A Unifying Review of Deep and Shallow Anomaly Detection}.
\newblock {\em Proceedings of the IEEE}, 109(5):756--795, 2021.

\bibitem{differnet_ad_pap}
M.~Rudolph, B.~Wandt, and B.~Rosenhahn.
\newblock {Same Same but Differnet: Semi-supervised Defect Detection with
  Normalizing Flows}.
\newblock In {\em IEEE Winter Conference on Applications of Computer Vision},
  pages 1907--1916, 2021.

\bibitem{panda_ad_pap}
T.~Reiss, N.~Cohen, L.~Bergman, and Y.~Hoshen.
\newblock {Panda: Adapting Pretrained Features for Anomaly Detection and
  Segmentation}.
\newblock In {\em IEEE Conference on Computer Vision and Pattern Recognition},
  pages 2806--2814, 2021.

\bibitem{btad_pap}
P.~Mishra, R.~Verk, D.~Fornasier, C.~Piciarelli, and G.~L. Foresti.
\newblock {VT-ADL: A Vision Transformer Network for Image Anomaly Detection and
  Localization}.
\newblock In {\em IEEE International Symposium on Industrial Electronics},
  pages 01--06, 2021.

\bibitem{cutpaste_ad_pap}
C.-L. Li, K.~Sohn, J.~Yoon, and T.~Pfister.
\newblock {CutPaste: Self-supervised Learning for Anomaly Detection and
  Localization}.
\newblock In {\em IEEE Conference on Computer Vision and Pattern Recognition},
  pages 9664--9674, 2021.

\bibitem{sewer_ws_pap}
J.~B. Haurum and T.~B. Moeslund.
\newblock {Sewer-ML: A Multi-label Sewer Defect Classification Dataset and
  Benchmark}.
\newblock In {\em IEEE Conference on Computer Vision and Pattern Recognition},
  pages 13456--13467, 2021.

\bibitem{att_al_pap}
S.~Venkataramanan, K.-C. Peng, R.~V. Singh, and A.~Mahalanobis.
\newblock {Attention Guided Anomaly Localization in Images}.
\newblock In {\em European Conference on Computer Vision}, pages 485--503,
  2020.

\bibitem{dom_shift_pap}
K.~Stacke, G.~Eilertsen, J.~Unger, and C.~Lundstr{\"o}m.
\newblock {Measuring Domain Shift for Deep Learning in Histopathology}.
\newblock {\em IEEE Journal of Biomedical and Health Informatics},
  25(2):325--336, 2020.

\bibitem{rep_occ_pap}
K.~Sohn, C.-L. Li, J.~Yoon, M.~Jin, and T.~Pfister.
\newblock {Learning and Evaluating Representations for Deep One-Class
  Classification}.
\newblock In {\em International Conference on Learning Representations}, 2020.

\bibitem{fsig_ewc_pap}
Y.~Li, R.~Zhang, J.~C. Lu, and E.~Shechtman.
\newblock {Few-shot Image Generation with Elastic Weight Consolidation}.
\newblock {\em Advances in Neural Information Processing Systems}, pages
  15885--15896, 2020.

\bibitem{spade_ad_pap}
N.~Cohen and Y.~Hoshen.
\newblock {Sub-image Anomaly Detection with Deep Pyramid Correspondences}.
\newblock {\em arXiv preprint arXiv:2005.02357}, 2020.

\bibitem{geo_ddist_pap}
D.~Alvarez-Melis and N.~Fusi.
\newblock {Geometric Dataset Distances via Optimal Transport}.
\newblock {\em Advances in Neural Information Processing Systems}, pages
  21428--21439, 2020.

\bibitem{rmi_ss_pap}
S.~Zhao, Y.~Wang, Z.~Yang, and D.~Cai.
\newblock {Region Mutual Information Loss for Semantic Segmentation}.
\newblock {\em Neural Information Processing Systems}, 2019.

\bibitem{empirical_ot_pap}
C.~Tameling, M.~Sommerfeld, and A.~Munk.
\newblock {Empirical Optimal Transport on Countable Metric Spaces:
  Distributional Limits and Statistical Applications}.
\newblock {\em The Annals of Applied Probability}, 29(5):2744--2781, 2019.

\bibitem{codebrim_pap}
M.~Mundt, S.~Majumder, S.~Murali, P.~Panetsos, and V.~Ramesh.
\newblock {Meta-learning Convolutional Neural Architectures for Multi-target
  Concrete Defect Classification with the Concrete Defect Bridge Image
  Dataset}.
\newblock In {\em IEEE Conference on Computer Vision and Pattern Recognition},
  pages 11196--11205, 2019.

\bibitem{mvtec_ad_pap}
P.~Bergmann, M.~Fauser, D.~Sattlegger, and C.~Steger.
\newblock {MVTec AD--A Comprehensive Real-world Dataset for Unsupervised
  Anomaly Detection}.
\newblock In {\em IEEE Conference on Computer Vision and Pattern Recognition},
  pages 9592--9600, 2019.

\bibitem{deep_occ_pap}
L.~Ruff, R.~Vandermeulen, N.~Goernitz, L.~Deecke, S.~A. Siddiqui, A.~Binder,
  E.~M{\"u}ller, and M.~Kloft.
\newblock {Deep one-class classification}.
\newblock In {\em International Conference on Machine Learning}, pages
  4393--4402, 2018.

\bibitem{umap_pap}
L.~McInnes, J.~Healy, N.~Saul, and L.~Gro{\ss}berger.
\newblock {UMAP: Uniform Manifold Approximation and Projection}.
\newblock {\em Journal of Open Source Software}, 3(29), 2018.

\bibitem{tp_bookchap}
R.~Rosenholtz.
\newblock {Texture Perception}.
\newblock In {\em The Oxford Handbook of Perceptual Organization}. Oxford
  University Press, 2015.

\bibitem{neu_det_pap}
K.~Song and Y.~Yan.
\newblock {A Noise Robust Method Based on Local Binary Patterns for Hot-rolled
  Steel Strip Surface Defects}.
\newblock {\em Elsevier Journal of Applied Surface Science}, 285:8--10, 2013.

\bibitem{21corr_pap}
T.~Speed.
\newblock {A Correlation for the 21st Century}.
\newblock {\em AAAS Science Journal}, 334(6062):2--7, 2011.

\bibitem{mah_bou_pap}
E.~G. Gath and K.~Hayes.
\newblock {Bounds for the Largest Mahalanobis Distance}.
\newblock {\em Elsevier Linear algebra and its Applications}, 419(1):93--106,
  2006.

\bibitem{rmi_sim_pap}
D.~B. Russakoff, C.~Tomasi, T.~Rohlfing, and C.~R. Maurer.
\newblock {Image Similarity using Mutual Information of Regions}.
\newblock In {\em European Conference on Computer Vision}, pages 596--607,
  2004.

\bibitem{shannon_div_pap}
J.~Lin.
\newblock {Divergence Measures Based on the Shannon Entropy}.
\newblock {\em IEEE Transactions on Information theory}, 37(1):145--151, 1991.

\end{thebibliography}


\end{document}